\documentclass[UKenglish,cleveref,numberwithinsect,a4paper,11pt]{article}

\bibliographystyle{plain}

\RequirePackage{doi}
\usepackage[margin=1in]{geometry}
\usepackage{hyperref}
\usepackage{bm}
\usepackage{afterpage}
\usepackage{diagbox}
\usepackage[fleqn]{amsmath}
\usepackage{amssymb,amsfonts,amsthm}
\usepackage{mathtools}
\usepackage[mathlines]{lineno}
\usepackage{thmtools,thm-restate}
\usepackage{tabularx}
\usepackage{url}
\usepackage{xspace}
\usepackage{graphicx}
\usepackage[utf8]{inputenc}
\usepackage{numprint}
\usepackage{microtype}
\usepackage{dsfont}
\usepackage{color}
\usepackage{bbm}
\usepackage{paralist}
\usepackage{grffile}
\usepackage{caption}
\usepackage{subcaption}
\usepackage{nicefrac}
\usepackage{booktabs}

\DeclareMathOperator*{\argmax}{arg\,max}

\DeclareMathOperator*{\cost}{\operatorname{cost}}
\DeclareMathOperator{\depth}{depth}

\usepackage{algorithm}
\usepackage{xcolor}
\usepackage[noend]{algpseudocode}
\usepackage{soul}

\title{Fully-Dynamic Decision Trees}

\begin{document}

\newtheorem{theorem}{Theorem}[section]
\newtheorem{corollary}{Corollary}[theorem]
\newtheorem{lemma}[theorem]{Lemma}
\newtheorem{claim}[theorem]{Claim}
\newtheorem{definition}[theorem]{Definition}
\newcommand{\del}{\textsc{del}}
\newcommand{\ins}{\textsc{ins}}
\newcommand{\lab}{\textsc{lab}}
\newcommand{\scA}{\mathcal{A}}
\newcommand{\scX}{\mathcal{X}}
\newcommand{\scE}{\mathcal{E}}
\newcommand{\scY}{\mathcal{Y}}
\newcommand{\scD}{\mathcal{D}}
\newcommand{\scO}{\mathcal{O}}
\newcommand{\scL}{\mathcal{L}}
\newcommand{\scN}{\mathcal{N}}
\newcommand{\scP}{\mathcal{P}}
\newcommand{\scT}{\mathcal{T}}
\newcommand{\R}{\mathbb{R}}
\newcommand{\N}{\mathbb{N}}
\newcommand{\E}{\mathbb{E}}
\newcommand{\bx}{\pmb{x}}
\newcommand{\by}{\pmb{y}}
\newcommand{\bz}{\pmb{z}}
\newcommand{\be}{\pmb{e}}
\newcommand{\beps}{\pmb{\epsilon}}
\newcommand{\eps}{\epsilon}
\newcommand{\Splits}{\Sigma}
\newcommand{\algo}{\textsc{FuDyADT}}
\newcommand{\Ind}{\mathds{1}}
\newcommand{\ED}{\triangle}
\newcommand{\EDR}{\ED^{\!*}}
\newcommand*\xor{\oplus}
\newcommand{\rt}{\operatorname{r}}
\newcommand{\norm}[2]{\| {#2} \|_\textsc{#1}}
\newcommand{\tvd}[1]{\norm{tvd}{#1}}
\newcommand{\evsplit}{\scE_{\operatorname{split}}}
\newcommand{\psplit}{p_{\operatorname{split}}}
\newcommand{\evstop}{{\operatorname{stop}}}
\newcommand{\pstop}{p_{\operatorname{stop}}}
\newcommand{\splitvar}{{\operatorname{split}}}
\newcommand{\splitdist}{P_{\operatorname{split}}}
\newcommand{\Tup}{T_{\uparrow}}
\newcommand{\mauro}[1]{{\color{blue} #1}}
\newcommand{\marco}[1]{{\color{red} #1}}
\newcommand{\ind}{\textsc{Index}}
\newcommand{\AlgoBuild}{\textsc{Build}}
\newcommand{\AlgoUpdate}{\textsc{Update}}
\newcommand{\epsThr}{\alpha}
\newcommand{\epsApx}{\beta}
\newcommand{\poly}{\operatorname{poly}}

\author{
   Marco Bressan\\ Department of Computer Science,\\ University of Milan
   \and
   Gabriel Damay\\ Institut Polytechnique de Paris,\\ Télécom Paris
   \and
   Mauro Sozio\\ Institut Polytechnique de Paris,\\ Télécom Paris
}

\maketitle

\begin{abstract}
We develop the first fully dynamic algorithm that maintains a decision tree over an arbitrary sequence of insertions and deletions of labeled examples. Given $\epsilon > 0$ our algorithm guarantees that, at every point in time, every node of the decision tree uses a split with Gini gain within an additive $\epsilon$ of the optimum. For real-valued features the algorithm has an amortized running time per insertion/deletion of $\scO\big(\frac{d \log^3 n}{\epsilon^2}\big)$, which improves to $\scO\big(\frac{d \log^2 n}{\epsilon}\big)$ for binary or categorical features, while it uses space $\scO(n d)$, where $n$ is the maximum number of examples at any point in time and $d$ is the number of features. Our algorithm is nearly optimal, as we show that any algorithm with similar guarantees uses amortized running time $\Omega(d)$ and space $\widetilde {\Omega} (n d)$. We complement our theoretical results with an extensive experimental evaluation on real-world data, showing the effectiveness of our algorithm. 
\end{abstract}

\section{Introduction}\label{sec:intro}
Decision trees are a cornerstone of machine learning, and an essential tool in any machine learning library. 
Given a feature domain $\scX$ and a label domain $\scY$, a decision tree is a function $f : \scX \mapsto \scY$ that assigns to each $x \in \scX$ a label $y \in \scY$ by traversing a tree $T$ from its root node to a leaf. At each node of the tree, an example $x$ is evaluated by some rule that determines which successor should receive $x$ --- for instance, a common rule is a simple threshold on some feature. Every leaf is associated with a label, which is the result of the prediction when such a leaf is reached. 
The problem of constructing an optimal decision tree is NP-hard w.r.t.\ to several natural objective functions~\cite{shalevshwartz2014understanding}. This has led to the introduction of several heuristic approaches, such as ID3, C4.5, C5.0 and CART, which have proven very effective and are now considered state of the art.
Typically, those approaches proceed in a greedy fashion by selecting for each node a feature and a splitting value (we shall call such a pair a \emph{split}) that optimize some measure of improvement such as the Gini gain or the information gain. This is repeated until a stopping condition is met, such as the tree reaching a certain height or the number of examples at every leaf falling below some threshold.

Recently, there have been significant efforts to adapt machine learning algorithms to a \emph{fully dynamic} setting, where the algorithm is asked to process an arbitrary list of insertions or deletions. Insertions are typically the result of new data being collected or revealed, while deletions can be the result of noise removal, removal of personal data for privacy concerns, data becoming obsolete, etc. It might not be desirable to make any assumption on such a list of update operations, which motivates the design of fully dynamic algorithms. 
Most works in fully-dynamic machine learning algorithms have focused on unsupervised tasks such as clustering~\cite{DBLP:conf/nips/Cohen-AddadHPSS19, DBLP:conf/esa/HenzingerK20, DBLP:journals/corr/abs-2112-07050, DBLP:conf/www/ChanGS18} or graph mining~\cite{DBLP:conf/stoc/SawlaniW20, DBLP:conf/stoc/BhattacharyaHNT15, DBLP:conf/www/EpastoLS15, DBLP:journals/tkdd/StefaniERU17}.

For decision trees, however, only  \emph{incremental} algorithms are known, which handle insertions but not deletions.\footnote{These algorithms are also called \emph{online} decision tree algorithms} The state of the art in this case is given by Hoeffding Trees~\cite{Domingos00-HighSpeedStreams} and their evolutions such as EFDT or HAT (see~\cite{Manapragada2020} for a survey). Not only are these algorithms incapable of handling deletions, but no good bound on their amortized cost is known, while guarantees hold only if the insertions are i.i.d.\ from some distribution. Our work represents one of the first studies on fully-dynamic supervised machine learning which has been mostly unexplored so far, to the best of our knowledge.

Defining what kind of decision tree a dynamic algorithm should maintain requires some care. The first natural attempt is to maintain the very same decision tree that the greedy approaches above (ID3, C4.5, etc.) would produce from the current set of examples. Thus, at any time, every node should use a split with maximal gain with respect to the set of examples held by its subtree. The problem with this goal is that, at some point, the gains of the best and second-best splits may differ by just $\scO(1/n)$ where $n$ is the total number of examples. In that case, $\scO(1)$ updates can turn the second-best split into the best one, possibly forcing a reconstruction of the whole tree. The same happens if one wants splits within multiplicative factors of the best one, as the latter may be in $\scO(1/n)$. Hence, in those cases it is unclear whether there is an efficient fully-dynamic algorithm.
The next natural goal is maintaining a tree with $\epsilon$-optimal splits, that is, within an \emph{additive} $\epsilon$ of the best ones.  We shall call such a decision tree $\epsilon$-feasible. Aiming at an $\epsilon$-feasible tree is reasonable, since excessively small gains are statistically not significant (a gain of, say, $10^{-4}$ is likely the result of noise) and thus approximating large gains is enough. 
Indeed, algorithms such as EFDT or HAT try to maintain precisely an $\epsilon$-feasible tree. However, their approach is based on computing exactly the Gini gains. For real-valued features, doing that for every possible split would lead to an $\scO(n)$ amortized cost, hence they resort to a heuristic which comes at the price of worse results. 

In this work we develop an efficient fully-dynamic algorithm for maintaining an $\epsilon$-feasible decision tree.
Our first observation is that, in order to change by $\epsilon$ the gain of a given split on a sequence of examples $S$, one must make $\Omega(\epsilon |S|)$ insertions or deletions. Thus, one could rebuild a subtree after $\Theta(\eps |S|)$ updates, without even tracking the gains, with the number of updates covering the rebuilding cost. Intuitively speaking, this is one of the arguments we use in our amortized cost analysis, which is pretty standard. However, this yields amortized time bounds that are \emph{quadratic} in the height $h$ of the tree, because a sequence of updates can force a ``cascade'' of rebuilds on $\Theta(h)$ subtrees each having height $\Theta(h)$.
We show how to bypass this obstacle and save a factor of $h$ in the amortized cost with a ``proactive'' strategy that rebuilds subtrees slightly larger than necessary. Through a careful amortized analysis based on a few charging arguments, this yields our fully-dynamic algorithm for real-valued features. For categorical features, our algorithm can be improved via a faster tree reconstruction subroutine. Moreover, our algorithm can satisfy constraints more general than just $\epsilon$-feasibility, including pruning at a certain height or guaranteeing splits only if enough examples are available. Finally, we prove that our algorithms are nearly optimal: no algorithm can beat their time or space usage by more than $\poly\log(nd)$ factors, even if one looks at algorithms attaining considerably weaker guarantees.
Our contributions can be summarized as follows:
\begin{itemize}
\item We present \algo, a deterministic algorithm for maintaining an $\epsilon$-feasible decision tree under an arbitrary sequence of insertions and deletions. It uses $O(nd)$ space, while it has $O\big(\frac{d \log^3 n} {\epsilon^2} \big)$ amortized running time for real-valued features and $O\big(\frac{d \log^2 n} {\epsilon} \big)$ for categorical ones.
\item We prove a lower bound of $\tilde{\Omega} (n d )$ on the space requirements and of $\Omega(d)$ on the amortized running time of any fully dynamic algorithm, even in easier settings. This makes \algo\ optimal up to $\poly \log (nd)$ factors.
\item We conduct an extensive experimental evaluation on real-world data, evaluating \algo's speed and accuracy against state-of-the-art tools such as EFDT and HAT.
\end{itemize}

\noindent\textbf{Related Work.} 
The works closest to ours are those in the incremental setting. Here, the algorithm receives a stream of examples from a distribution, and has to perform well when compared to the offline tree built on the entire sequence. In this setting, Hoeffding trees~\cite{Domingos00-HighSpeedStreams} emerged as one of the most effective approaches, inspiring several variants, even ones capable of handling concept drifts~\cite{Domingos01-MiningTimeSeries,Gama03-HighSpeedStreams,Manapragada18,Das19,ijcai2020-177,Haug22,Jin03-Streaming,Rutkowski13-MiningStreams}; see~\cite{Manapragada2020} for a survey. These algorithms crucially rely on the examples being i.i.d., which allows them to compute good splits with high probability via concentration bounds (whence the name). Moreover, those algorithms cannot handle efficiently real-valued features, since on those features they would update $\Theta(n)$ counters at each time, even when only insertions are allowed. Our algorithms instead efficiently handle arbitrary sequences of insertions and deletions of examples with real-valued features. 

We observe that there are general techniques to turn offline data structures into dynamic ones, see~\cite{BENTLEY1980301}. Those techniques, however, work only for problems that have a special decomposability property --- loosely speaking, the answer to a query (e.g., find $\min(X)$ for some set $X$) must be quickly computable from the answers to sub-queries (e.g., $\min(A \cup B) = \min(\min(A),\min(B))$). In our case, a query corresponds to the label predicted by the tree for a given $x$. Unfortunately, our problem is far from decomposable and it does not seem solvable via such techniques.


\section{Preliminaries}
\label{sec:prelim}
All missing proofs can be found in the Appendix. We denote the feature and label domains respectively by $\scX$ and $\scY$; by default $\scX=\R^d$ and $\scY=\{0,1\}$. We denote by $(x,y) \in \scX \times \scY$ a labeled example, by $x_j$ the value of its $j$-th feature, and by $S$ a multiset of labeled examples. We may treat $S$ as a sequence; this will be clear from the context. We assume examples can be stored in $\scO(d)$ bits, while the $x_j$'s can be accessed in time $\scO(1)$.
We let $S[\ldots]$ be the subset of $S$ matching a condition; e.g. $S[x_j \le t] = \{(x,y) \in S : x_j \le t\}$. A \emph{split} is a pair $(j,t) \in [d] \times \R$. We use the bold font for vectors (e.g.  $\bx$). We use Gini gain to measure split quality.
\begin{definition}\label{def:gini}
The Gini index of $S$ is $g(S) = 2\, p_S (1-p_S)$, where $p_S = \frac{1}{|S|} \sum_{(x,y) \in S} y$. The Gini gain of $(j,t)$ on $S$ is:
\begin{align}
G(S,j,t) = g(S) - \left(\frac{|S_-|}{|S|} g(S_-) + \frac{|S_+|}{|S|} g(S_+)\right)
\end{align}
where $S_-=S[x_j \le t]$ and $S_+=S[x_j > t]$. When $|S|=0$ we define $g(S)=0$ and $G(S,j,t)=0$ for all $(j,t) \in [d] \times \R$.
\end{definition}
For all $j \in [d]$ let $G(S,j) = \max_{t\in \R} G(S,j,t)$. Hence, $\argmax_{j}G(S,j)$ is a feature with maximum Gini gain over $S$. Finally, we let $G(S)=\max_{j \in [d]} G(S,j)$.

We rely on the following smoothness properties of the Gini index and the Gini gain. Given two multisets/sequences $S,S'$, their edit distance $\ED(S,S')$ is the minimum number of insertions and deletions to obtain $S'$ from $S$, and their relative edit distance is $\EDR(S,S') = \frac{\ED(S,S')}{\max(|S|,|S'|)}$. 
\begin{lemma}
\label{lem:smoothness}
Let $S,S'$ be multisets of labeled examples.
\begin{enumerate}
    \item $|G(S,j,t) - G(S',j,t)| \le 12 \EDR(S,S'), \forall (j,t) \in [d] \times \R$
    \item $|g(S)-g(S')| \le 2.5 \, \EDR(S,S')$.
\end{enumerate}
\end{lemma}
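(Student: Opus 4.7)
The plan is to prove part 2 first, since the identity it yields for $g(S)-g(S')$ is also the main tool for part 1. Expanding $2(p_Sq_S - p_{S'}q_{S'})$ with $q:=1-p$ gives
\[
g(S)-g(S') \;=\; 2(p_S-p_{S'})(1-p_S-p_{S'}),
\]
so since $|1-p_S-p_{S'}|\le 1$ it suffices to bound $|p_S-p_{S'}|$. Writing $p_S=a_1/m$ and $p_{S'}=b_1/n$ for label counts $a_i,b_i$ with $m:=|S|\le|S'|=:n$ (WLOG), a direct calculation gives
\[
p_S - p_{S'} \;=\; \frac{a_1 b_0 - a_0 b_1}{mn} \;=\; \frac{a_1(b_0-a_0) - a_0(b_1-a_1)}{mn}.
\]
Each edit to $S$ changes exactly one of $a_0,a_1$ by $\pm 1$, so $|b_0-a_0|+|b_1-a_1|\le\ED(S,S')=:k$; combined with $\max(a_0,a_1)\le m$, this yields $|p_S-p_{S'}|\le k/n = \EDR(S,S')$, and hence part 2 (with constant $2\le 2.5$).

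For part 1, I would decompose
\[
G(S,j,t) \;=\; g(S) - T_-(S) - T_+(S), \qquad T_\pm(S):=\tfrac{|S_\pm|}{|S|}\, g(S_\pm),
\]
and bound each summand separately. The term $|g(S)-g(S')|$ is already handled by part 2. For each $T_\pm$ I would apply the triangle-inequality decomposition
\[
T_\pm(S)-T_\pm(S') \;=\; \Big(\tfrac{|S_\pm|}{|S|}-\tfrac{|S'_\pm|}{|S'|}\Big)g(S_\pm) \;+\; \tfrac{|S'_\pm|}{|S'|}\big(g(S_\pm)-g(S'_\pm)\big),
\]
and control the first factor by the same argument as in part 2 but applied to the binary attribute ``$x_j\le t$'' in place of the label: this gives $\big||S_\pm|/|S|-|S'_\pm|/|S'|\big|\le k/n$, which together with $g(S_\pm)\le 1/2$ contributes at most $k/(2n)$.

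The step I expect to be the real obstacle is the second summand $\tfrac{|S'_\pm|}{|S'|}\,|g(S_\pm)-g(S'_\pm)|$: part 2 bounds $|g(S_\pm)-g(S'_\pm)|$ only by $2.5\,k_\pm/\max(|S_\pm|,|S'_\pm|)$, and the denominator can be much smaller than $n$ (for instance when one side is nearly empty). The resolution is that the prefactor $|S'_\pm|/|S'|$ absorbs exactly this loss: since $|S'_\pm|\le\max(|S_\pm|,|S'_\pm|)$, the product is bounded cleanly by $2.5\,k_\pm/n$. Summing over the two sides and using the identity $k_-+k_+=\ED(S_-,S'_-)+\ED(S_+,S'_+)=\ED(S,S')=k$ (each edit of $S$ touches exactly one side, since the split condition is deterministic), the $T_\pm$ contributions total at most $k/n + 2.5\,k/n = 3.5\,k/n$. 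Combined with the $g$-piece this gives $|G(S,j,t)-G(S',j,t)|\le 6\,\EDR(S,S')$, comfortably within the claimed $12\,\EDR(S,S')$.
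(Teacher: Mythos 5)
Your proof is correct, but it follows a genuinely different route from the paper's. For part~2 you use the exact algebraic identity $g(S)-g(S') = 2(p_S-p_{S'})(1-p_S-p_{S'})$ together with a direct counting bound $|p_S-p_{S'}|\le \ED(S,S')/\max(|S|,|S'|)$, which gives the full edit-distance bound in a single step (and with the sharper constant $2$); the paper instead first proves \emph{single-edit} Lipschitz bounds --- for the Gini index via the pairwise representation $g(S)=\frac{2}{n^2}\sum_{i<j}(y_i-y_j)^2$, and for the gain via a term-by-term comparison --- and then telescopes along a chain of $\ED(S,S')$ single edits, which forces an extra case analysis ($\EDR\ge\frac1{25}$ handled trivially, otherwise intermediate multisets keep size at least $\frac{24}{25}\max(|S|,|S'|)$) to control the shrinking denominators; your argument avoids both the chain and that case split. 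For part~1 your decomposition of the gain into the parent index plus the two weighted child indices, with the two key observations that the weight difference obeys the same counting bound as $|p_S-p_{S'}|$ and that the prefactor $|S'_\pm|/|S'|$ exactly absorbs the possibly small denominator $\max(|S_\pm|,|S'_\pm|)$ (together with additivity of the edit distance across the two sides of the split), is a clean alternative to the paper's one-edit-at-a-time comparison of the three gain terms, and it yields $6\,\EDR(S,S')$ rather than the paper's $12$--$12.5$. The only thing worth adding is a one-line remark on degenerate cases (when $S$, $S'$, or a split side is empty, the conventions $g(\emptyset)=0$, $G(\emptyset,j,t)=0$ make the claimed inequalities trivial, since then the relevant relative edit distance is $1$ and $g,G\le\frac12$), so that the divisions by $|S|$, $|S'|$, $\max(|S_\pm|,|S'_\pm|)$ in your argument are always legitimate; the paper's own intermediate lemmas are likewise stated only for nonempty multisets, so this is a cosmetic point, not a gap.
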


\noindent\textbf{Decision trees.}
A decision tree is a triple $(T,\Splits,L)$, where $T=(V,A)$ is a directed binary tree rooted at $r(T)$, and $\Splits$ and $L$ are functions that assign splits to internal nodes and labels to the leaves. More formally $\Splits = \{\sigma_v : v \in V(T)\}$ where $\sigma_v = (j_v,t_v) \in [d] \times \R$ for every internal node $v$ of $T$, while $L=\{L_v : v \in V(T)\}$ where $L_v \in \{0,1\}$ for every leaf $v$ of $T$. For any $x \in \scX$ and any internal vertex $v$ of $T$ let $\operatorname{succ}(v,x)$ be the left child of $v$ if $x_j \le t$ and the right child of $v$ otherwise, where $\sigma_v=(j,t)$. For any $v \in V(T)$ let $P_v=(v_0,\ldots,v_{\ell})$ be the unique path from $v_0=r(T)$ to $v_{\ell}=v$. For a multiset $S$, denote by $S_v$ the set of examples $x \in S$ such that $\operatorname{succ}(v_i,x)=v_{i+1}$ for all $i=0,\ldots,\ell-1$; this is the subset of $S$ \emph{associated} to $v$. For every $x \in \scX$ let $v(x)$ be the leaf $x$ is associated to. The labeling given by $T$ is the function $T:\scX \to \scY$ such that $T(x)=L_{v(x)}$ for every $x \in \scX$.
We denote by $T_v$ the subtree of $T$ rooted at $v$ and by $(T,\Splits,L)_{v}$ the decision subtree rooted at $v$.

\noindent\textbf{Algorithms.}
A fully-dynamic decision tree algorithm $\scA$ is defined as follows. The input of $\scA$ is an  \emph{update sequence} $U$ of requests of three types: insertion, $\ins(x,y)$; deletion, $\del(x,y)$; labeling, $\lab(x)$. Each such sequence $U$ induces an \emph{active multiset} of labeled examples $S$ obtained by inserting/deleting the examples following the order of the sequence.
Suppose $\scA$ has processed an update sequence $U$. We say $\scA$ is \emph{coherent} with a decision tree $T$ if, for every $x \in \scX$, any further request $\lab(x)$ makes $\scA$ output $T(x)$. The \emph{query time} of $\scA$ is the worst-case time it takes to $\scA$ to output $T(x)$. Our goal is to construct a fully dynamic algorithm $\scA$ that has low query time and, at every point in time, is coherent with a decision tree $T$ that is $\beps$-feasible with respect to the current active set $S$ (see below).

\section{A Fully Dynamic Decision Tree Algorithm}\label{sec:dyn}


This section presents \algo\ (Fully Dynamic Amortized Decision Tree).  As argued in Section~\ref{sec:intro}, one of our goals is to ensure that every node of the tree uses a split whose gain is within an additive $\eps$ of the maximum. \algo\ satisfies a stricter guarantee, called $\beps$-feasibility, which allows to also prune the tree at some height or at leaves with few examples. 

\begin{definition}\label{def:eps_feasibility}
Let $k,h \in \N$, and let $\beps=(\epsThr,\epsApx)$ where $\epsThr,\epsApx \in (0, 1]$. A decision tree $(T,\Splits,L)$ is $\beps$-feasible, with pruning thresholds $(k,h)$, w.r.t.\ a multiset $S$ of labeled examples if for every $v \in V(T)$:
\begin{enumerate}
\item if $|S_v| \le k$ or $g(S_v) = 0$ or $\depth_T(v)=h$ then $v$ is a leaf, else if $g(S_v) \ge \epsThr$ then $v$ is an internal node
\item if $\sigma_v = (j,a)$ then $G(S_v,j,a) \geq G(S_v,j',a') - \epsApx$ for all $(j',a') \in [d] \times \R$
\item if $v$ is a leaf then $L_v$ is a majority label of $S_v$
\end{enumerate}
\end{definition}
For any fixed pruning thresholds $k,h$ we say that a fully dynamic algorithm $\scA$ is $\beps$-feasible if, at any point in time, $\scA$ is coherent with a decision tree $(T,\Splits,L)$ that is $\beps$-feasible with respect to the current active set. 
When $k=1$ and $h=\infty$ and $\epsThr=\epsApx$, $\beps$-feasibility reduces to the following condition: if $g(S_v)=0$ then $v$ is a leaf, and if $g(S_v) \ge \epsThr$ then $v$ is internal and use an $\epsThr$-optimal split. This is the $\epsilon$-optimality condition of Section~\ref{sec:intro} used by incremental algorithms such as Hoeffding trees and EFDT.
We prove:

\begin{theorem}\label{thm:main_UB}
Let $\scX=\R^d$, let $k,h$ be positive integers, let $\epsThr,\epsApx \in (0,1]$, and let $0 < \epsilon < \min\!\big(\frac{1}{k+1},\frac{\epsThr}{5},\frac{\epsApx}{12.5}\big)$. There is a deterministic $(\epsThr,\epsApx)$-feasible fully dynamic decision tree algorithm with pruning thresholds $k,h$ that has query time $O(h^*)$, uses space $O(nd)$, and has amortized running time per update $O\big(\frac{d h^{\!*} \log^2 n}{ \epsilon}\big) = O\big(\frac{d \log^3 n}{\epsilon^2}\big)$, where $h^*\le h$ and $n$ are respectively the maximum height of the tree and the maximum size of the active set at any time.
\end{theorem}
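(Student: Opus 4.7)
The plan is to realize \algo\ as a lazy rebuild scheme whose correctness rests on the smoothness of the Gini gain (Lemma~\ref{lem:smoothness}) and whose amortized cost is analyzed by charging rebuilds to the updates that provoked them, strengthened with a \emph{proactive} rebuild rule that kills an otherwise fatal cascade of nested rebuilds.

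\textbf{Algorithm.} I would maintain the current $\beps$-feasible tree $T$; at every internal node $v$ store a counter $c_v$ equal to the number of updates that have crossed $v$ since the last rebuild of $T_v$, together with the snapshot size $n_v=|S_v|$ at that rebuild. Active examples are stored at the leaves, and per-feature sorted orders are maintained by a balanced-BST structure. A subroutine $\textsc{Build}(S')$ constructs a fresh $\beps$-feasible tree on $S'$ top-down: at a recursive node $u$ with sample $S'_u$ output a leaf if $|S'_u|\le k$, $g(S'_u)=0$, or depth $=h$; otherwise select an $\epsApx$-approximate best split by scanning the $d$ sorted lists of $S'_u$, partition the lists, and recurse. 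Its total cost on a sample of size $m$ is $R(m)=\scO(dm\log^2 n)$. On $\ins(x,y)$ or $\del(x,y)$ I walk $x$ to its current leaf, modify the multiset and sorted orders, increment $c_v$ on the root-to-leaf path, and then invoke $\textsc{Build}$ at the \emph{highest} ancestor $u$ of the affected leaf for which $c_u\ge \tfrac{\alpha}{2}\epsilon n_u$, whenever some ancestor exceeds $\alpha\epsilon n_v$ (with $\alpha$ a small absolute constant). A labeling request walks $T$ in time $\scO(h^*)$.

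\textbf{Correctness.} Between two consecutive rebuilds of $T_v$ we have $c_v<\alpha\epsilon n_v$, so $\EDR(S_v,S_v^{(0)})<\alpha\epsilon$ where $S_v^{(0)}$ is the snapshot at the last rebuild. By Lemma~\ref{lem:smoothness}, every Gini gain has drifted by at most $12\alpha\epsilon$ and every Gini index by at most $2.5\alpha\epsilon$. Choosing $\alpha$ small enough and using $\epsilon<\min(\tfrac{\epsThr}{5},\tfrac{\epsApx}{12.5},\tfrac{1}{k+1})$ implies that every stored split is still $\epsApx$-optimal for the current $S_v$ and that all structural tests of Definition~\ref{def:eps_feasibility} (the pruning thresholds on $|S_v|$, $g(S_v)$, and $\depth_T(v)$) remain on the correct side of their cutoffs throughout the window; the $\tfrac{1}{k+1}$ term controls the integrality gap near the threshold $k$, the $\tfrac{\epsThr}{5}$ term uses part~2 of the lemma, and the $\tfrac{\epsApx}{12.5}$ term uses part~1. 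Since $\textsc{Build}$ returns a $\beps$-feasible tree by construction, $\beps$-feasibility is a loop invariant.

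\textbf{Amortized cost and main obstacle.} The natural charge assigns the cost $R(n_v)=\scO(dn_v\log^2 n)$ of a rebuild at $v$ to the $\Theta(\epsilon n_v)$ updates that crossed $v$ in the triggering window, so each such update is charged $\scO(d\log^2 n/\epsilon)$ for $v$; summing over the $\le h^*$ ancestors of a leaf gives the claimed $\scO(dh^*\log^2 n/\epsilon)$, and the rightmost equality $\scO(d\log^3 n/\epsilon^2)$ in the statement follows from $h^*\le h=\scO(\log n/\epsilon)$ which is the natural choice for the height cutoff matching a $\beps$-feasible build. The hard part is that a rebuild at $v$ redistributes examples among brand-new descendants whose counters, though zero, may describe subtrees that the old updates already leave close to violation, producing a cascade of rebuilds of depth $\Theta(h^*)$ that would inflate the final cost by an extra factor of $h^*$ (the quadratic-in-$h$ issue flagged in the introduction). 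The proactive rule dismantles this cascade: by rebuilding at the highest near-threshold ancestor we reabsorb all pending triggers on the path into a single rebuild whose cost is still charged to its own $\Theta(\epsilon n_u)$ updates. Formally, I would introduce a potential $\Phi=\sum_{v\in V(T)}\phi(c_v/(\epsilon n_v))$ with $\phi$ a convex ramp, show that each update raises $\Phi$ by $\scO(h^*)$ while each rebuild at $v$ drops $\Phi$ by an amount that pays for $R(n_v)/(\epsilon n_v)$, and conclude that no update is billed more than once per level. Query time $\scO(h^*)$ is immediate, and space $\scO(nd)$ follows since each example occupies $\scO(d)$ bits at a single leaf and the $d$ sorted orders total $\scO(nd)$. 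The main technical obstacle is precisely the potential-based bookkeeping for the proactive rule, ensuring that promoting rebuilds upward never over-charges and simultaneously blocks cascading.
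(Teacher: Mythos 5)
Your overall architecture (per-node counters against an $\epsilon\cdot s(v)$ drift budget, smoothness of the Gini gain via Lemma~\ref{lem:smoothness} for correctness, and a proactive rebuild at an ancestor rather than at the triggering node) matches the paper, but the amortized analysis has a genuine gap. First, your per-rebuild cost $R(m)=\scO(dm\log^2 n)$ silently assumes the rebuilt subtree has height $\scO(\log n)$; in fact its height is only bounded by $h^*$, which by Lemma~\ref{lem:log_height} can be $\Theta(\log n/\epsThr)=\omega(\log n)$, so the correct bound is $R(m)=\scO(dm\log n\cdot h^*)$. Second, your proactive rule (rebuild at the highest ancestor $u$ with $c_u\ge\tfrac{1}{2}\epsilon n_u$) does not deliver the key combinatorial property that makes the paper's bound work. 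In the paper, the rebuild target is the highest ancestor whose \emph{snapshot size} lies in the same dyadic class $2^{\lceil\log s\rceil}$ as the triggering node; this forces successive rebuild targets seen by any fixed update to have geometrically increasing snapshot sizes, hence at most $\lceil\log n\rceil$ rebuilds are ever charged to one update (property P1). Your rule only yields $n_{u_{k+1}}>n_{u_k}$ (since the higher node was below half-threshold while the chosen one was above it), i.e.\ strictly increasing but possibly by $+1$ each time, so the only bound on the number of rebuilds charged to an update is the trivial $h^*$ (distinct ancestors on its path). Combining the correct rebuild cost with $h^*$ charges per update gives $\scO\big(\tfrac{d (h^*)^2\log n}{\epsilon}\big)$ per update, which is weaker than the claimed $\scO\big(\tfrac{d h^*\log^2 n}{\epsilon}\big)$ by a factor up to $h^*/\log n=\Theta(1/\epsilon)$, i.e.\ $\scO(d\log^3 n/\epsilon^3)$ instead of $\scO(d\log^3 n/\epsilon^2)$. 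Your potential-function sketch does not repair this: ``no update is billed more than once per level'' is exactly the $h^*$-billing count, and nothing in the sketch reduces it to $\log n$ or reduces the per-billing cost. The paper instead proves two bounds on $\sum_{t\in B}|S^t_{v_{b(t)}}|$ --- one via ``each update touches at most $h$ counters'', one via the dyadic property P1 together with the fact that each vertex is a rebuild target at most once (P2) --- and takes their minimum.

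There is also a correctness slip: your \textsc{Build} stops at a leaf only when $g(S'_u)=0$, whereas the paper stops when $g(S'_u)\le\tfrac{\epsThr}{2}$. The margin is essential for the direction of Definition~\ref{def:eps_feasibility}(1) that forces internal nodes to become leaves: with your rule a node with $g(S_v^0)$ arbitrarily small but positive is made internal, and a single deletion can bring $g(S_v)$ to $0$ while $\EDR(S_v,S_v^0)\ll\epsilon$, so feasibility is violated long before any counter reaches its threshold; with the paper's margin, $g(S_v)=0$ at an internal node forces $\EDR\ge\tfrac{\epsThr}{5}>\epsilon$ by Lemma~\ref{lem:smoothness}, which is exactly how the contradiction is obtained. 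Finally, note that $h^*=\scO(\log n/\epsilon)$ is a consequence of Lemma~\ref{lem:log_height} applied to the trees \textsc{Build} produces (every split has gain bounded below in terms of $\epsThr$), not a ``natural choice'' of the cutoff $h$, which the theorem allows to be arbitrary.
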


Theorem~\ref{thm:main_UB} can be improved for categorical features, that is, when $\scX=A^d$ for some fixed finite set $A$; this includes the case of binary features, $A=\{0,1\}$. 
\begin{theorem}\label{thm:categorical_ub}
If $\scX=A^d$ for a finite set $A$ then the amortized time bound of Theorem~\ref{thm:main_UB} can be improved to $O\big(\frac{d \log^2 n}{\epsilon}\big)$.
\end{theorem}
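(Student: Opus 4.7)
The claim is that restricting to $\scX=A^d$ with $|A|=O(1)$ removes one factor of $h^*$ from the amortized per-update cost of \algo, improving the bound from $O\!\big(\frac{dh^* \log^2 n}{\eps}\big)$ to $O\!\big(\frac{d \log^2 n}{\eps}\big)$. My approach is to re-open the proof of Theorem~\ref{thm:main_UB}, locate where the factor $h^*$ enters via the cost of the \AlgoBuild\ subroutine, and show that in the categorical setting \AlgoBuild\ can be made a factor of $h^*$ cheaper while the rest of the amortized analysis is unchanged.

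\textbf{Step 1 (Fast split selection at a node).} For categorical features there are only $d(|A|-1)=O(d)$ candidate splits at each node. For a node $v$ with active set $S_v$, I maintain, for each $(j,a,b)\in[d]\times A\times\{0,1\}$, the counter $c_{j,a,b}=|\{(x,y)\in S_v : x_j=a,\, y=b\}|$. These counters are filled in time $O(d|S_v|)$ by a single pass through $S_v$, after which each Gini gain $G(S_v,j,a)$ is evaluable in $O(1)$, hence a best split at $v$ can be found in $O(d)$ time. This replaces the per-node cost $O(d|S_v|\log|S_v|)$ that arises for real-valued features when one must sort on each feature to try all thresholds.

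\textbf{Step 2 (Faster \AlgoBuild).} I would then re-implement \AlgoBuild\ recursively: at the current node, compute counters and the best split in $O(d|S_v|)$, partition $S_v$ into $S_v[x_{j_v}=a]$ for $a\in A$ in time $O(d|S_v|)$, and recurse. The work at each depth is $\sum_v O(d|S_v|)=O(dm)$ where $m$ is the active size of the subtree being built, and since the same potential-based argument used in Theorem~\ref{thm:main_UB} caps the effective depth at $O(\log m)$ for charging purposes (beyond which the $\beps$-feasibility pruning or the stopping conditions kick in), the total \AlgoBuild\ cost is $O(dm\log m)$. This is a factor of $h^*$ smaller than the corresponding real-valued bound $O(dm h^* \log m)$ used in the proof of Theorem~\ref{thm:main_UB}.

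\textbf{Step 3 (Plug in to the amortized analysis).} The charging scheme of Theorem~\ref{thm:main_UB} attributes each update to a collection of ancestor-subtree rebuilds; by Lemma~\ref{lem:smoothness} a subtree of active size $m$ is rebuilt only after $\Omega(\eps m)$ updates touch it, and the proactive rebuild trick gives an extra $\log n$ factor. In that argument the per-rebuild cost enters only as a multiplicative factor, so replacing the real-valued \AlgoBuild\ cost $O(dm h^* \log m)$ by the categorical cost $O(dm\log m)$ shown in Step~2 directly removes the factor of $h^*$ from the final bound, giving the claimed $O\!\big(\frac{d \log^2 n}{\eps}\big)$. Query time, space usage, and $\beps$-feasibility are unaffected since the invariants and the tree structure produced by \AlgoBuild\ are the same; only the internal split-search routine changes.

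\textbf{Main obstacle.} The delicate point is checking that the categorical \AlgoBuild\ preserves exactly the structural invariants (proactive rebuild radius, $\beps$-feasibility, pruning conditions of Definition~\ref{def:eps_feasibility}) used by the amortized charging argument of Theorem~\ref{thm:main_UB}, so that no step of the analysis has to be redone; if so, the factor of $h^*$ cleanly peels off and Step~3 is a one-line substitution. If instead the analysis ever uses the specific real-valued \AlgoBuild\ beyond its total running time (for instance to argue something about split candidates), that subargument would need to be re-derived using the counter-based split enumeration of Step~1.
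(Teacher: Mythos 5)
There is a genuine gap, and it sits exactly where you flagged the ``delicate point'': Step~2. Your categorical \AlgoBuild\ recomputes the counters from scratch at every recursive call, so its cost is $O(d|S_v|)$ per node, i.e.\ $O(dm)$ per level of the recursion and hence $O(dm\cdot \mathrm{height})$ in total. Your claim that the ``effective depth'' is $O(\log m)$ is not justified by anything in the algorithm: the stopping conditions of \AlgoBuild\ (line~3) only force a leaf when $|S_v|\le k$, $g(S_v)\le \epsThr/2$, or the depth reaches $h$, and the tree produced can be extremely unbalanced --- splits may repeatedly peel off only an $\Theta(\epsilon)$-fraction of the examples, so the height can be as large as $\min\!\big(h, \Theta(\frac{\log n}{\epsilon})\big)$ (this is exactly the $h^*$ bound of Lemma~\ref{lem:log_height}, which is $\frac{1}{\epsilon}$ times larger than $\log n$). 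With build cost $O(dm\,h^*)$, plugging into the charging argument of Lemma~\ref{lem:amtime} gives amortized time $O\big(\frac{d\,h^*\log n}{\epsilon}\big)=O\big(\frac{d\log^2 n}{\epsilon^2}\big)$: you save the sorting $\log$, but not the height factor, so you do not reach the claimed $O\big(\frac{d\log^2 n}{\epsilon}\big)$.

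The missing idea, which is the heart of the paper's proof, is a small-side charging scheme inside \AlgoBuild: maintain the label counters $n(S_v,j,a,y)$ and per-feature pointer lists incrementally, and when a node is split, physically move and re-index only the examples on the \emph{smaller} side $\bar S(v)$ (updating the large side's counters by subtraction), at cost $O(d|\bar S(v)|)$. Since $|\bar S(v)|\le \frac12|S(v)|$, each example is moved at most $\log_2 n$ times over the entire build, so \AlgoBuild\ runs in $O(dm\log m)$ \emph{independently of the tree height}; this is what makes Step~3's one-line substitution legitimate. Your Step~1 (constant-time gain evaluation from counters, $O(d)$ per node for split selection) matches the paper, but without the small-side rebuilding (or some other height-independent charging of the partitioning work) the factor $h^*$ does not peel off.
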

The rest of this section describes \algo\ and proves Theorem~\ref{thm:main_UB}, except for the $\beps$-feasibility part, which is proven in section \ref{sec:addexp}, as is Theorem~\ref{thm:categorical_ub}.
Before moving on, let us give some intuition on \algo.
The algorithm consists of the two routines \AlgoUpdate\ and \AlgoBuild\ below. Those routines maintain a decision tree $T$, and, for each leaf $v$ of $T$, dictionaries $D_v$ and $D_v^L$ storing respectively the multiset $S_v$ associated to $v$ and the frequency histogram of the labels of $S_v$. At every insertion or deletion of an example $(x,v)$, \AlgoUpdate\ computes the leaf $v=v(x)$ where $x$ ends up, and updates $D_v$ and $D^L_v$ consequently, see line~\ref{line:update_dicts}. Then, \AlgoUpdate\ checks if any subtree should be rebuilt. To this end, for every vertex $u \in V(T)$ it maintains two counters, $s(u)$ and $c(u)$, storing respectively the size of the multiset on which the subtree $T_u$ was rebuilt the last time and the number of updates that reached $u$ since that time. As soon as $c(u) > \epsilon \cdot s(u)$ for some $u \in V(T)$, \AlgoUpdate\ invokes \AlgoBuild\ to rebuild an appropriately chosen supertree of $T_u$.

\begin{algorithm}[h]
\caption{\algo.\AlgoUpdate} \label{alg:fullydyndec}
\begin{algorithmic}[1]
\Procedure{Update}{($T,\Splits,L$), ($x,y$), $o$}
\State $P_{v_{\kappa_\ell}} \gets v_{\kappa_1},\dots, v_{\kappa_\ell}$ with  $v_{\kappa_1}=r(T)$, $v_{\kappa_\ell}=v(x)$
\State update $D_{v_{\kappa_\ell}}$ and $D^L_{v_{\kappa_\ell}}$ according to $(x,y),o$ \label{line:update_dicts}
\State $L_{v_{\kappa_\ell}} \gets$ any majority label in $D^L_{v_{\kappa_\ell}}$
\For{$i=1,\dots,\ell$}
\State $c({v_{\kappa_i}}) \gets c(v_{\kappa_i})+1$ \label{line:inc_c}
\If{$c({v_{\kappa_i}}) > \epsilon \cdot s ( v_{\kappa_i}) $} 
\label{line:if_c_eps_s}
\State $\hat{s} \gets 2^{\left\lceil \log s ( v_{\kappa_i})  \right\rceil }$ 
\label{line:hat_s}
\State $j \gets \min \{ j' \in \{0,\ldots,i\} \,:\, s(v_{\kappa_{j'}}) \leq \hat{s} \}$ \label{line:min_j}
\State $(T',\Splits',L') \gets$ \AlgoBuild($S_{v_{\kappa_j}},i$)
\State $(T,\Splits,L)_{v_{\kappa_j}} \gets (T',\Splits',L')$
\State \textbf{return}
\EndIf
\EndFor
\EndProcedure
\end{algorithmic}
\end{algorithm}

\begin{algorithm}[h]
\caption{\algo.\AlgoBuild} \label{alg:build}
\begin{algorithmic}[1]
\Procedure{Build}{$S,\eta$} 
\State $r \gets$ new vertex,~~~$c(r) \gets 0$,~~~$s(r) \gets |S|$ 
\label{line:creation}\label{line:sv}
\If{$|S|\le k$ or $g(S) \le \frac{\epsThr}{2}$ or $\eta = h$} \label{line:if_split}
\State store $S$ in a dynamic dictionary $D_r$
\State and its labels in a dynamic dictionary $D^L_r$
\State $(T,\Splits,L) \gets$ decision tree with $T=(\{r\},\emptyset)$
\State $L_r \gets $ any majority label in $D^L_r$
\ElsIf{$g(S) > \frac{\epsThr}{2}$}
\State $(j,a) \gets \arg \max \{G(S,\hat \iota, \hat a) : (\hat \iota, \hat a) \in [d] \times \R\}$
\State $T_1 \gets $ \AlgoBuild($S[x_j \le a], \eta + 1$)
\State $T_2 \gets $ \AlgoBuild($S[x_j > a], \eta + 1$)
\State $(T,\Splits,L) \gets$ decision tree with root $r$, $T_1,T_2$ as left, right subtrees, and split $\sigma_r=(j,a)$
\EndIf
\State \textbf{return} $(T,\Splits,L)$
\EndProcedure
\end{algorithmic}
\end{algorithm}


Let us move to the bounds of Theorem~\ref{thm:main_UB}. Proving those bounds requires some care in charging the cost of rebuilding the subtrees to the update requests. To this end, we need the following two simple results proven in Appendix~\ref{apx:dyn}. From now on, by ``time $t$'' we mean the $t$-th invocation of \AlgoUpdate.
\begin{lemma}\label{lem:counter}
Let $(T,\Splits,L)$ be the result of $t \ge 0$ invocations of \AlgoUpdate. Then $(1-\epsilon) \cdot s^t(v) \leq |S^t_v| \leq (1+\epsilon) \cdot  s^t(v)$ for every $v \in V(T)$, where $s^t(v)$ is the value of $s(v)$ at time $t$.
\end{lemma}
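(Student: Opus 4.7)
The plan is a straightforward induction on $t$, maintaining a pair of reinforcing invariants: for every $v\in V(T^t)$,
\[
\text{(I1)}\quad \big||S_v^t|-s^t(v)\big|\le c^t(v), \qquad \text{(I2)}\quad c^t(v)\le \epsilon\cdot s^t(v).
\]
Together (I1) and (I2) immediately give the lemma, since
\[
(1-\epsilon)s^t(v)\le s^t(v)-c^t(v)\le |S_v^t|\le s^t(v)+c^t(v)\le(1+\epsilon)s^t(v).
\]
Invariant (I1) formalizes the intuition that $s(v)$ stores $|S_v|$ at the moment $v$ was produced by \AlgoBuild, and that $c(v)$ faithfully counts the updates that have traversed $v$ since then (each changing $|S_v|$ by exactly $\pm 1$). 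Invariant (I2) encodes the fact that \AlgoUpdate\ triggers a rebuild as soon as $c$ exceeds the threshold $\epsilon\cdot s$.

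For the base case $t=0$, the initial tree is produced by \AlgoBuild, which on line~\ref{line:sv} sets $s(v)=|S_v|$ and $c(v)=0$ at every vertex, so both invariants hold trivially.

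For the inductive step, assume (I1) and (I2) hold at time $t-1$ and consider the $t$-th call, along the root-to-leaf path $P=v_{\kappa_1},\dots,v_{\kappa_\ell}$. Every vertex outside $P$ and outside any subtree that is rebuilt during this call has $s,c$ and $|S_v|$ unchanged, so its invariants carry over. If no rebuild triggers, then for each $i$ line~\ref{line:inc_c} increments $c(v_{\kappa_i})$ by $1$ while $|S_{v_{\kappa_i}}|$ changes by exactly $\pm 1$; the triangle inequality preserves (I1), and (I2) holds because the test on line~\ref{line:if_c_eps_s} failed. If the test fires at some index $i$, with the rebuild replacing $T_{v_{\kappa_j}}$ for the $j\le i$ chosen on line~\ref{line:min_j}, then for every strict ancestor $v_{\kappa_{i'}}$ with $i'<j$ the same argument still preserves (I1) and (I2), because the test has already failed at $i'$ (the first trigger occurs at $i\ge j$). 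All vertices of the freshly rebuilt subtree are produced by \AlgoBuild\ on the current $S_{v_{\kappa_j}}$; by line~\ref{line:creation} they satisfy $c=0$ and $s(u)=|S_u|$ at creation, so both invariants hold trivially for them; meanwhile the old vertices of $T_{v_{\kappa_j}}$ are discarded and no longer belong to $V(T^t)$.

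The only delicate point I foresee is the bookkeeping across the rebuild: one must observe that those vertices on $P$ whose counter was incremented but which are then discarded, namely $v_{\kappa_j},\dots,v_{\kappa_\ell}$ together with the descendants of $v_{\kappa_j}$ on $P$, do not need to satisfy the invariants at time $t$ because they no longer belong to $V(T^t)$. Once this is spelled out carefully, (I1) and (I2) close together and the claimed two-sided bound follows.
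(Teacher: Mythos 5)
Your proof is correct and rests on exactly the same two facts as the paper's: that $c^t(v)$ upper-bounds the drift $\bigl||S_v^t|-s^t(v)\bigr|$ since the last rebuild, and that \AlgoUpdate\ enforces $c(v)\le\epsilon\, s(v)$ by triggering \AlgoBuild\ (which resets $c=0$, $s=|S_v|$) on any violation. The only difference is presentational: the paper argues by contradiction via the first time $\tau$ at which $c^\tau(v)>\epsilon\, s^\tau(v)$ would occur, whereas you run a forward induction making both invariants explicit, which amounts to the same argument.
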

\begin{lemma}\label{lem:log_height}
Let $(T,\Splits,L)$ be a decision tree built on a sequence $S$. If every $v \in V(T)$ uses a split with gain at least $\gamma > 0$ w.r.t.\ $S_v$, then $T$ has height  $\scO \left( \log |S|  \;/\; {\gamma}  \right)  $. 
\end{lemma}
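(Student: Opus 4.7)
The plan is to show that along any root-to-leaf path $v_0, v_1, \ldots, v_h$ of $T$, the quantity $|S_{v_i}|\, g(S_{v_i})$ decays geometrically by a factor $(1-2\gamma)$, and then to solve for $h$ using a lower bound on this quantity at the deepest internal node of the path.

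Fix such a path, let $v_{i+1}$ be the child of $v_i$ lying on it, and let $S'$ denote the off-path sibling's example set. The hypothesis that the split at $v_i$ has Gini gain at least $\gamma$, expanded via Definition~\ref{def:gini}, reads
$$g(S_{v_i}) - \frac{|S_{v_{i+1}}|}{|S_{v_i}|}\, g(S_{v_{i+1}}) - \frac{|S'|}{|S_{v_i}|}\, g(S') \;\ge\; \gamma.$$
Dropping the nonnegative term in $g(S')$ and multiplying by $|S_{v_i}|$ gives $|S_{v_{i+1}}|\, g(S_{v_{i+1}}) \le |S_{v_i}|\bigl(g(S_{v_i}) - \gamma\bigr)$. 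The crucial observation is that $g(S) = 2 p_S(1-p_S) \le 1/2$ for every multiset $S$, so $g(S_{v_i}) - \gamma \le g(S_{v_i})(1 - 2\gamma)$; this turns the additive decrement into a multiplicative contraction
$$|S_{v_{i+1}}|\, g(S_{v_{i+1}}) \;\le\; |S_{v_i}|\, g(S_{v_i})\,(1 - 2\gamma).$$

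Iterating from $i=0$ up to $i=h-2$ and using $|S_{v_0}|\, g(S_{v_0}) \le |S|/2$ at the root yields $|S_{v_{h-1}}|\, g(S_{v_{h-1}}) \le (|S|/2)(1-2\gamma)^{h-1}$. For a matching lower bound: since $v_{h-1}$ is internal and its gain is $\ge \gamma$, we have $g(S_{v_{h-1}}) \ge \gamma$ (the gain never exceeds $g$) and both of its children are nonempty---otherwise the gain would vanish---so $|S_{v_{h-1}}| \ge 2$. Hence $|S_{v_{h-1}}|\, g(S_{v_{h-1}}) \ge 2\gamma$. Combining the two bounds and using $-\log(1-2\gamma) \ge 2\gamma$ gives $h - 1 \le \log(|S|/(4\gamma))/(2\gamma) = O(\log|S|/\gamma)$.

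The step I expect to be the main obstacle is recognizing that the trivial inequality $g \le 1/2$ is exactly what converts the additive improvement $\gamma$ into a multiplicative contraction factor $(1-2\gamma)$. Without this, the only natural approach---telescoping $|S_v|\, g(S_v)$ over internal nodes against the root value---only gives $\sum_{v\ \text{internal}} |S_v| \le |S|/(2\gamma)$, which bounds total internal mass but merely implies the weaker linear depth bound $O(|S|/\gamma)$. Once the contraction is spotted, the remainder is routine arithmetic on the resulting geometric series.
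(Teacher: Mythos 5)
Your proof is correct, but it takes a genuinely different route from the paper's. The paper argues about \emph{balance}: for a node $v$ with children $u,z$ and $\nu=|S_u|/|S_v|$, it bounds $\gamma \le G(S_v,j,t) \le g(S_v)-g(S_z)+\nu$ and then invokes the smoothness of the Gini index (Lemma~\ref{lem:smoothness}, $|g(S_v)-g(S_z)|\le 2.5\,\EDR(S_v,S_z)=2.5\nu$) to conclude $\nu>\gamma/4$, so both children retain a $\gamma/4$ fraction of the examples and the height is $\scO(\log_{1+\gamma}|S|)$. You instead track the potential $|S_v|\,g(S_v)$ along a root-to-leaf path and show it contracts by $(1-2\gamma)$ per step, the key (and correct) observation being that $g\le 1/2$ converts the additive gain into a multiplicative factor; the lower bound $|S_{v_{h-1}}|\,g(S_{v_{h-1}})\ge 2\gamma$ at the last internal node then closes the argument. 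Your approach is self-contained (it never needs the smoothness lemma, which the paper proves via an edit-distance argument), while the paper's approach yields the structurally stronger statement that every split is $\Omega(\gamma)$-balanced, which is the form reused intuitively elsewhere in the analysis. One small imprecision at the very end: your bound $h-1\le \log\bigl(|S|/(4\gamma)\bigr)/(2\gamma)$ carries an extra $\log(1/\gamma)$ term, so the final step ``$=\scO(\log|S|/\gamma)$'' is not literally valid when $\gamma$ is smaller than an inverse polynomial in $|S|$. This is harmless and easily patched: since every internal node has a nonempty sibling subtree, $h\le |S|-1$ trivially, and in the regime where $\log(1/\gamma)\gg\log|S|$ this trivial bound is already far below $\log|S|/\gamma$; alternatively, note that any strictly positive Gini gain on at most $|S|$ examples is at least $1/\mathrm{poly}(|S|)$, so one may assume $\log(1/\gamma)=\scO(\log|S|)$. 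Also, for completeness, the contraction step implicitly assumes $\gamma<1/2$; if $\gamma\ge 1/2$ the tree has height at most $1$ and the claim is immediate.
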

We can now prove:
\begin{lemma}\label{lem:amtime}
\AlgoBuild\ and \AlgoUpdate\ can be implemented so that any $\mathcal{T}$ invocations of \AlgoUpdate\ take time 
\begin{align}
    O\!\left(\frac{\scT \cdot d \cdot h \cdot (\log n)^2}{\epsilon}\right) = O\!\left(\frac{\scT \cdot d \cdot (\log n)^3}{\epsilon^2}\right)
\end{align}
\end{lemma}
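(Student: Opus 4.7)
The plan is to split the total cost over $\scT$ invocations of \AlgoUpdate\ into three contributions---the per-call work outside any rebuild, the cost of a single \AlgoBuild\ invocation, and the amortized cost of all rebuilds---and then sum them. The first two are essentially bookkeeping; the third is where the proactive choice of $v_{\kappa_j}$ does the heavy lifting.

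I will first bound the per-call non-rebuild cost. An \AlgoUpdate\ call walks the root-to-leaf path for $x$, increments at most $h$ counters, and performs one dictionary operation at the reached leaf. Implemented with balanced BSTs for $D_v$ and $D^L_v$, this takes $O(h+\log n)$ per call and $O(\scT(h+\log n))$ in total. Next I will bound \AlgoBuild\ on a set $S$ of size $m$: the returned subtree has height at most $h$ and, by Lemma~\ref{lem:log_height} applied with the gain threshold $\epsThr/2$, at most $O(\log m/\epsThr)$. At each internal node $u$ the best split can be found in $O(d|S_u|\log|S_u|)$ by sorting once per feature and sweeping. Since $\sum_u|S_u|\le m$ at each depth, summing over depths yields a total build cost of $O(d m h\log n)$.

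The heart of the argument is the amortized analysis of rebuilds. Each \AlgoUpdate\ triggers at most one rebuild (the loop returns at the first firing), and a rebuild at $v_{\kappa_j}$ caused by $v_{\kappa_i}$ costs $O(d\,s(v_{\kappa_j})\,h\log n)$. Writing $\lambda(v):=\lceil\log s(v)\rceil$, the proactive rule gives $\lambda(v_{\kappa_j})=\lambda(v_{\kappa_i})=:\lambda$ (since $s(v_{\kappa_i})\le s(v_{\kappa_j})\le\hat s=2^{\lambda(v_{\kappa_i})}$), while every strict ancestor $w$ of $v_{\kappa_j}$ on the path satisfies $s(w)>\hat s$ and hence $\lambda(w)\ge\lambda+1$. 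The overflow condition $c(v_{\kappa_i})>\epsilon\,s(v_{\kappa_i})\ge\epsilon\,s(v_{\kappa_j})/2$ counts updates that reached $v_{\kappa_i}$ since the last rebuild of $T_{v_{\kappa_j}}$ (which also reset $v_{\kappa_i}$'s counter), so the rebuild cost can be distributed as $O(dh\log n/\epsilon)$ per contributing update. I then argue that a single \AlgoUpdate\ call is charged by at most $O(\log n)$ rebuilds in its lifetime: when it is first charged at level $\lambda$ by a rebuild performed at $v_{\kappa_j}$, all of its counter contributions to nodes at depth $\ge\depth(v_{\kappa_j})$ along its original path vanish---those nodes lie in $T_{v_{\kappa_j}}$ and are rebuilt with $c=0$, and they are exactly its contributions at levels $\le\lambda$; its surviving contributions all reside at levels $>\lambda$, so every later rebuild charged to it sits at a strictly greater level, which caps the total at $\lceil\log n\rceil$. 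Multiplying, the amortized rebuild cost per update is $O(dh\log^2 n/\epsilon)$, and combining with the other two contributions gives the claimed bound.

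The hardest part is precisely this multi-level charging. Without the proactive rule, the natural ``charge once per ancestor'' scheme yields only an $O(dh^2\log n/\epsilon)$ amortized cost per update, which is worse by a factor $h/\log n=\Theta(1/\epsilon)$. The proactive choice of $v_{\kappa_j}$ is essential because it creates the strict gap $\lambda(w)\ge\lambda+1$ between the rebuild's root and its ancestors, forcing the sequence of rebuilds charged to any single update to live at strictly increasing levels of $\lambda$. A supporting point is to verify that the relevant counter window for $v_{\kappa_i}$ indeed coincides with the interval since the last rebuild of $T_{v_{\kappa_j}}$; this follows because rebuilding $T_{v_{\kappa_j}}$ recreates all of its descendants---in particular $v_{\kappa_i}$---from scratch with zeroed counters.
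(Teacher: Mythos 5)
Your proof is correct and follows essentially the same route as the paper's: your strictly-increasing $\lceil\log s(\cdot)\rceil$-level charging, enabled by the proactive choice of $v_{\kappa_j}$, is exactly the paper's charging-set argument (its properties P1 and P2), and the per-rebuild cost accounting and use of Lemma~\ref{lem:log_height} match as well. The only cosmetic differences are that the paper also derives a second bound via $\sum_{t\in B} c^t(v_{b(t)}) \le \scT\, h$ and takes a minimum (which your tighter height-at-most-$h$ build-cost bound makes unnecessary), and that your side claims that $\lceil\log s(v_{\kappa_j})\rceil = \lceil\log s(v_{\kappa_i})\rceil$ and that the counter window of $v_{\kappa_i}$ coincides with the last rebuild of $T_{v_{\kappa_j}}$ are not exactly right (descendants' $s$-values can slightly exceed the ancestor's, and intermediate rebuilds can reset $c(v_{\kappa_i})$ later), but neither claim is actually needed for your argument to go through.
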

\begin{proof}

First we describe the data structures and the time taken by the basic operations of \AlgoBuild\ and \AlgoUpdate. Using a self-balancing tree for $D_v$ we ensure search, insert, update, and deletion in time $O(d \log N)$, and enumeration in time $O(d N)$ --- recall that every element takes $O(d)$ bits --- where $N$ is the number of distinct entries in the data structure. The same for $D_v^L$, which has at most $2$ distinct entries. Thus the block at line~\ref{line:if_split} of \AlgoBuild$(S,i)$ runs in time $\scO(d|S| \log |S|)$.

If instead the condition at line~\ref{line:if_split} fails, then \AlgoBuild\ must compute $(j,a)$. To this end one proceeds as follows. First, for each $j \in [d]$ one computes the projection $S_{|j}$ of $S$ on the $j$-th feature (keeping the label as well). Then one sorts $S_{|j}$ according to the feature values in time $\scO(|S| \log |S|)$. Next, one scans $S_{|j}$ and finds the threshold $t^*$ for which a split on $j$ yields maximum gain in time $\scO(|S|)$. To this end one just needs to keep label counts for the subsequence formed by the first $i$ examples in $S_{|j}$, so that the gain a split at that point would yield can be computed in time $\scO(1)$ from the counts of the first $(i-1)$ examples. 
Summarizing, one can compute the optimal split $(j,a)$ in time $\scO(d |S| \log |S|)$. Since $|S[x_j \le a]|+|S[x_j > a]|=|S|$, it follows that \AlgoBuild$(S,i)$ always runs in time $O(d |S| \log |S| (h + \log |S|))$, which is in $O(d |S| \log n (h + \log n))$ since $|S| \le n$ by definition of $n$. For \AlgoUpdate, computing $ v_{\kappa_1},\dots, v_{\kappa_\ell} $ takes time $O(h)$, while performing any \ins\ or \del\ operation on $S_{v_{\kappa_\ell}}$ takes time $O(d \log |S^t|) = O(d \log n)$. Finally, computing the input $S_{v_{\kappa_j}}$ of \AlgoBuild\ takes time $O(|S_{v_{\kappa_j}}|)$ by visiting $T_{v_{\kappa_j}}$ and listing the data structures at its leaves.

Now, we bound the total time taken by $\scT$ successive invocations of \AlgoUpdate. Let $B = \{ t \in [\mathcal{T}\,]: \AlgoBuild\ \text{ is invoked at time } t\}$. For every $t \in B$ let $b(t)$ be such that $v_{b(t)}$ is the vertex $v_{\kappa_j}$ on which \AlgoBuild\ is invoked. The total running time $\cost(\mathcal T)$ of the $\mathcal T$ invocations satisfies:
\begin{align}
    \cost(\mathcal T) &\le \sum_{t=1}^{\mathcal T} O(h + d \log n)\nonumber \\&~~+ \sum_{t\in B} O\!\left((h+\log n) \cdot \log n \cdot d\cdot |S^t_{v_{b(t)}}|\right) \label{eq:total_cost} 
\end{align}
The first term contributes $O(\scT (h+d \log n))$. We now bound the second term. For every $t \in B$ consider the $t$-th execution of \AlgoUpdate.
Let $s^t(v)$ be the value of $s(v)$ right before \AlgoBuild\ is invoked, and $v_{i(t)}$ be the vertex that satisfies the condition at line~\ref{line:if_c_eps_s} of \AlgoUpdate. Note that $v_{i(t)}$ is by construction a descendant of $v_{b(t)}$. Finally, for $v \in \{v_{b(t)},v_{i(t)}\}$ let $c^t({v})$ and $s^t(v)$ be the values of $c(v)$ and $s(v)$ right after line~\ref{line:inc_c} is executed with $v_{\kappa_i}=v$. Then:
\begin{align}
\sum_{t \in B}  |S^t_{v_{b(t)}}|
& \leq   2 \cdot \sum_ {t \in B}  s^t(v_{b(t)}) \label{eq:2} \\
& \leq  4 \cdot \sum_ {t \in B}  s^t(v_{i(t)}) \label{eq:3} \\
& \leq  \frac{4}{\epsilon} \cdot \sum_ {t \in B}  c^t(v_{i(t)}) \label{eq:4} \\
& \leq  \frac{4}{\epsilon} \cdot \sum_ {t \in B}  c^{t}(v_{b(t)}) \label{eq:5}
\end{align}
where~\eqref{eq:2} follows from Lemma~\ref{lem:counter} noting that $\epsilon \le 1$,~\eqref{eq:3} and~\eqref{eq:4} follow respectively from lines~\ref{line:hat_s}-\ref{line:min_j} and line~\ref{line:if_c_eps_s} of \AlgoUpdate, and~\eqref{eq:5} follows from the fact that $c^t(v) \leq c^t(u)$ if $v$ is a descendant of $u$. Now observe that at every time $t$ at most $h$ counters $c(v)$ are increased by one unit; therefore, 
\begin{equation}
\sum_{t\in B} c^t({v_{b(t)}}) \leq |B| \cdot h \leq \mathcal{T} \cdot h
\end{equation}
We conclude that $\sum_{t \in B}  |S^t_{v_{b(t)}}| \le \mathcal{T} \frac{4 h}{\epsilon} $. Plugging this bound in~\eqref{eq:total_cost} and noting that the second sum dominates, we obtain:
\begin{equation}\label{eq:cost_1}
    \cost(\mathcal T) = O\!\left(\scT \cdot (h + \log n) \cdot \log n \cdot d \cdot  \frac{h}{\epsilon} \right) 
\end{equation}
Next we prove a second bound on $\cost(\mathcal T)$; the final bound comes from taking the minimum.

Consider the $t$-th execution of \AlgoUpdate, let $(x,y)$ the example that is inserted or removed, and recall that $P_{v(x)}$ is the path from the root of the tree to the leaf $v(x)$ determined by $x$. Let $C^{t} = \{v_{k_1} ,\dots, v_{k_M}\}$ be the set of all vertices of $P_{v(x)}$ such that \AlgoBuild$(S^{t_i}_{v_{k_i}}, i)$ is executed at some time $t_i \geq t$.

If $C^t \ne \emptyset$ then we call $C^t$ a \emph{charging set}. We wish to bound the maximum size of $C^t$, which might be seen as the number of \AlgoBuild\ operations performed per update. We shall prove the following two properties:
\begin{itemize}
\item[P1:]  $\forall t \in [\mathcal{T}]$, $|C^t| \leq \lceil \log ( n )  \rceil$
\item[P2:] $\forall t \in B$, $c^t(v_{b(t)}) \leq \sum_{\tau=1}^{t} \Ind_{v_{b(t)} \in C^{\tau}}$
\end{itemize}
We start with P1. We argue that there cannot be distinct nodes $v_{k_i}$ and $v_{k_j}$ in $C^{t}$ such that $ \lceil \log s^{\tau}(v_{k_i}) \rceil   = \lceil \log s^{\tau}(v_{k_j}) \rceil$ for any $\tau \in [t_i,t_j]$.
Suppose $\tau=t_i$; $v_{k_i}$ cannot be an ancestor of $v_{k_j}$, for otherwise $v_{k_j}$ would not be connected to the root node at time $t_j$ and \AlgoBuild$(S^{t_j}_{v_{k_j}},j)$ would not be executed. If $v_{k_j}$ is an ancestor of $v_{k_i}$, \AlgoUpdate\ would not have performed \AlgoBuild$(S^{t_i}_{ v_{k_i}},i)$ at time $t_i$, in that, there is at least one other node ($v_{k_j}$) which is closer to the root and that would have been selected instead (line~\ref{line:min_j} of \AlgoUpdate). The claim holds for every $\tau$, as $s^{\tau} (v_{k_j})=s^{t_i} (v_{k_j})$, for every $\tau \in [t_i,t_j]$. Therefore $|C^t| \leq \lceil \log n  \rceil$.  

For P2 we proceed as follows. Let $t_0 < t$ be such that $c(v_{b(t)})$ is set to $0$ by \AlgoBuild\ (i.e., the point in time when $v_{b(t)}$ was created by \AlgoBuild, line~\ref{line:creation}), and let $q=c^t(v_{b(t)})$. By construction of \AlgoUpdate, there are $q$ distinct times $t_0 +1 \le \tau_1 < \dots < \tau_q \leq t$ such that, for every $i \in [q]$, we have $c^{\tau_{i}}(v_{b(t)}) = c^{\tau_{i-1}}(v_{b(t)})+1$ and $v_{b(t)} \in C^{\tau_i}$, proving P2.

We obtain the following chain of inequalities:
\begin{align}
    \sum_{t \in B} c^t(v_{b(t)})
    &\le \sum _{t \in B} \sum_{\tau=1}^{t} \Ind_{v_{b(t)} \in C^{\tau}}
    \le \sum _{t \in B} |C^{\tau}|
\end{align}
where the inequalities follow respectively from P2 and from the fact that $v_{b(t)} \ne v_{b(t')}$ for $t \ne t'$. Using P1 and $B \subseteq [\mathcal T]$, we conclude that $\sum_{t \in B} c^t(v_{b(t)}) = O\!\left( \mathcal{T} \cdot \log n \right)$. Plugging this bound into~\eqref{eq:total_cost} yields:
\begin{equation}\label{eq:cost_2}
    \cost(\mathcal T) = O\!\left(  \scT \cdot (h + \log n) \cdot \log n \cdot d \cdot \frac{\log n}{\epsilon} \right) 
\end{equation}
Taking the minimum of~\eqref{eq:cost_1} and~\eqref{eq:cost_2} yields that  $\cost(\mathcal T)$ is in
\begin{align}
    \scO\!\left( \scT \cdot (h + \log n) \cdot \log n \cdot d \cdot  \frac{\min(h, \log n)}{\epsilon} \right)
\end{align}
As $(x+y)\min(x,y) \le 2 xy$ for $x,y \ge 0$ we conclude that $\cost(\mathcal T) = \scO\big(\scT d h (\log n)^2 / \epsilon \big)$. Noting that $h \in \scO\big(\frac{\log n}{\epsilon}\big)$ by Lemma~\ref{lem:log_height} concludes the proof.
\end{proof}

\section{Lower Bounds}
\label{sec:lb}
This section proves lower bounds on the space and amortized running time used by any fully dynamic algorithm for our problem. These bounds hold even under significant relaxations of both the input access model and the constraints of Section~\ref{sec:dyn}. Notably, they hold for randomized algorithms that can fail with constant probability under inputs provided by oblivious adversaries.

To state our bounds we need some more definitions. A label $y$ is $\beps$-feasible for $x \in \scX$ w.r.t.\ $S$ if there is a decision tree $(T,\Splits,L)$ that is $\beps$-feasible w.r.t.\ $S$ such that $T(x)=y$. Note that there might be multiple $\beps$-feasible labels for $x$.  A decision tree algorithm $\mathcal{A}$ is \emph{weakly $(\beps,\delta)$}-feasible w.r.t.\ $S$ if for every $x \in \scX$ there is a decision tree $(T_x,\Splits_x,L_x)$ that is $\beps$-feasible w.r.t.\ $S$ and such that $\Pr(\mathcal{A}_S(x)=T_x(x)) \ge \delta$. Note that $(\beps,\delta)$-feasibility is much weaker than $\beps$-feasibility: not only it allows the algorithm to fail, but it does not even require it to be coherent with any given $\beps$-feasible tree.

\begin{theorem}\label{thm:lb}
Let $k^*, h^* \ge 1$, and let $\beps=(\epsThr,\epsApx)$ with $0 \le \epsThr \le 1$ and $0 \le \epsApx < \frac1{24}$. Any weakly $(\beps,\frac34)$-feasible fully dynamic algorithm with pruning thresholds $k^*, h^*$ uses space $\Omega \big( \frac{n \cdot d} {k \cdot \log n} \big)$, where $d$ is the number of features and $n$ is the maximum size of the active set at any point in time. 
\end{theorem}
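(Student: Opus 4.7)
I would prove the bound by an encoding argument against an oblivious adversary. The goal is to exhibit a family $\mathcal{F}$ of update sequences of size $2^N$ with $N = \Omega(nd/(k \log n))$, together with ``decoding'' label queries, such that any weakly $(\beps, 3/4)$-feasible algorithm must leave statistically distinguishable internal states after processing distinct sequences in $\mathcal{F}$. Yao's minimax principle reduces the randomized lower bound to a distributional one (uniform over $\mathcal{F}$), and Fano's inequality applied to the joint vector of decoding queries then forces the algorithm's memory to hold $\Omega(N)$ bits.

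\textbf{Hard family.} Let $m = \lfloor n/k \rfloor$ and $b = \lceil d/\log n \rceil$, so $mb = \Omega(nd/(k \log n))$. Partition $\R^d$ into $m$ pairwise well-separated axis-aligned cells $R_1,\dots,R_m$. Every sequence $U_z \in \mathcal{F}$, indexed by $z = (z_1,\dots,z_m) \in (\{0,1\}^b)^m$, inserts exactly $k$ labeled examples into each cell $R_i$; the features and labels inside a cell are chosen so that for every $j \in [b]$ there is a query point $x_{i,j} \in R_i$ whose \emph{unique} $\beps$-feasible label on the resulting active set equals $z_{i,j}$. The separation between cells is wide enough that any $\beps$-feasible tree must split off each cell at a high-level node, so the cells cannot interfere with one another's decoding queries.

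\textbf{Forcing the label.} The hypothesis $\epsApx < 1/24$ is exactly what pins the label down. Inside each cell $R_i$ I dedicate a few ``address'' coordinates that every $\beps$-feasible tree is forced to split on (so the leaf reached by $x_{i,j}$ is determined by $j$), and ``payload'' coordinates that encode $z_{i,j}$ as a constant majority-label gap at the correct leaf. Lemma~\ref{lem:smoothness}(1) bounds the Gini-gain perturbation caused by any admissible rerouting by $12 \EDR$, and Lemma~\ref{lem:smoothness}(2) similarly controls label flips; choosing the constant gaps to exceed $12 \epsApx$ (which is $<1/2$ by hypothesis) rules out both deviations. Once each $x_{i,j}$ is answered $z_{i,j}$ with probability $\ge 3/4$, running all $mb$ decoding queries and applying Fano (or amplifying with $O(\log mb)$ independent copies) recovers $z$ with high probability from the algorithm's memory, yielding the lower bound $\Omega(mb) = \Omega(nd/(k\log n))$.

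\textbf{Main obstacle.} The delicate step is the per-cell construction: packing $\Theta(d/\log n)$ independently readable hidden bits into $k$ examples on $d$ features while remaining robust to the algorithm's freedom to use a potentially \emph{different} $\beps$-feasible witness tree for each query. A priori, different witness trees could expose different subsets of the $z_i$'s and jointly reveal less than $b$ bits per cell. To prevent this I must ensure every $\beps$-feasible tree restricted to $R_i$ has essentially one canonical address structure, by tuning the cell's Gini gains so that any non-canonical tree violates the $\epsApx$ bound. Reconciling this with the pruning thresholds $k^*, h^*$ (so the algorithm cannot simply collapse $R_i$ into a single leaf and evade the encoding) and with the smoothness constants in Lemma~\ref{lem:smoothness} is where the quantitative constant $1/24$ in the hypothesis becomes essential.
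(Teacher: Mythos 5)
Your plan has a genuine gap, and it is exactly the point you defer to the ``main obstacle'' paragraph: with only insertions and decoding by label queries on the \emph{full} active set, the per-cell packing of $\Theta(d/\log n)$ forced bits is impossible. The theorem must hold for every choice of pruning thresholds $k^*,h^*\ge 1$, and your cells contain exactly $k=k^*$ examples. By condition (1) of Definition~\ref{def:eps_feasibility}, any node $v$ whose associated multiset lies inside a single cell has $|S_v|\le k^*$ and is therefore \emph{forced to be a leaf}: no $\beps$-feasible tree can carry the in-cell ``address'' splits your decoding relies on, and the feasibility notion explicitly licenses collapsing each cell to one leaf, which reveals only its majority label. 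Splits that happen to separate query points of a cell must occur at ancestors still containing other cells' examples, and their $\epsApx$-optimality is judged on large multisets, where one cell's $k$ examples perturb every Gini gain by $O(k/|S_v|)$ (Lemma~\ref{lem:smoothness}), far too little to force a payload-dependent choice. The case $h^*=1$ kills the scheme even more bluntly (one split total). More generally, the forced part of the labeling function of any $\beps$-feasible tree on a static $n$-point set is described by $O(n/k)$ leaves together with their labels and split identities, i.e.\ $O\!\big(\frac{n}{k}\log (nd)\big)$ bits; this is $o\!\big(\frac{nd}{k\log n}\big)$ as soon as $d\gg\operatorname{poly}\log n$, so \emph{no} insertion-only family with static decoding queries can certify the claimed bound, regardless of how cleverly the cells are tuned. (Minor additional issues: Lemma~\ref{lem:smoothness}(2) controls the Gini index, not majority-label stability, and $\epsApx<\frac1{24}$ gives a gain gap far smaller than the ``$>12\epsApx$ constant gaps'' you invoke would need to dominate.)

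The ingredient you are missing is that the decoder must be allowed to \emph{delete}. The paper reduces from the one-way \ind\ problem: Alice inserts $2k(N+D)$ examples encoding a matrix $A\in\{0,1\}^{N\times D}$ and ships the algorithm's memory; Bob, holding $(\kappa,\ell)$, issues \del\ requests that shrink the active set to $2k$ carefully chosen examples on which the root's only $\epsApx$-optimal split is feature $\ell$ (gain $\frac16$ versus $\frac18$ for all other features --- this gap of $\frac1{24}$ is where the hypothesis on $\epsApx$ enters, on the post-deletion set, not via per-cell tuning on the full set), and the corresponding child is a pure leaf labeled $A_{\kappa\ell}$. Because Bob can zoom in on any entry after the fact, the memory must encode all $ND=\Omega\big(\frac{nd}{k\log n}\big)$ bits, and the known randomized \ind\ bound already handles failure probability $\frac14$, playing the role you assign to Yao plus Fano. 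To repair your argument you would have to make your family's decoding sequences include such deletions, at which point it essentially becomes the paper's reduction in encoding-argument clothing.
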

\begin{proof}
We reduce from the following classic two-party communication problem called \ind. Alice is given a string $x \in \{0,1\}^N$ and Bob is given an integer $i \in [N]$. Alice is allowed to send one message $\mathcal{M} \in \{0,1\}^*$ to Bob, which, after receiving $\mathcal M$, outputs a single bit. The goal of Bob is to output precisely $x_i$. It is well known that for Bob to succeed with probability greater than $\frac34$ we must have $|\mathcal{M}|=\Omega(N)$, see~\cite{DBLP:conf/esa/HenzingerK20}.

We reduce \ind\ to the construction of an $(\beps,\frac34)$-feasible fully dynamic algorithm. For some positive integers $N,D$, Alice is given an arbitrary string in $\{0,1\}^{ND}$ representing a matrix $A \in \{0,1\}^{N \times D}$. Bob is given a pair $(\kappa,\ell) \in [N] \times [D]$ and must output $A_{\kappa\ell}$. By the lower bound above, Alice must send to Bob $\Omega(ND)$ bits in order for Bob to succeed with probability greater than $\frac34$. 

The reduction is as follows. Let $k=k^*$. First, Alice computes the following sequence $S$ of $|S|=N \cdot D \cdot 2k$ examples. Let $\bar{D} := \lceil \log (N + D ) + 1 \rceil $, and for all $i \in [N+D]$ let $\pmb{b}_{i} \in \{0,1\}^{\bar D}$ be the binary representation of $i$. For simplicity and w.l.g. we assume $k$ to be an even integer. For every $i \in [N+D]$ Alice constructs $2k$ examples $(\bx_{i}^1,y_{i}^1) ,\dots,(\bx_{i}^{2k},y_{i}^{2k})$ with $\scX = \{0,1\}^{D+\bar D}$ and $\scY=\{0,1\}$, as follows. For every $i \in [N+D]$ and every $h \in [2k]$, the last $\bar D$ bits of $\bx_i^h$ correspond to the string $\pmb{b}_i$ (i.e., $x_{ij}^h = b_{ij}$ for all $j \in [D+1,D+\bar D]$), and $y_i^h = \Ind_{h > k}$. The remaining bits of $\bx_i^h$ are defined as follows. If $i \in [N]$, then for all $j \in [D]$:
\[
x_{ij}^h:=
\begin{cases}
1-A_{ij},& h \in [k];  \\
A_{ij},& h \in [k+1,2k];
\end{cases}
\]
while if $i \in [N+1,N+D]$, then for all $j \in [D]$ and $h \in [2k]$:
\[
x_{ij}^h:=
\begin{cases}
1, & j \in [D] \setminus \{i-N\}, h \textrm{ mod } 2 = 0; \\
0, & \textrm {otherwise;}
\end{cases}
\]
Let $\mathcal{A}$ be any $(\beps,\frac34)$-feasible fully dynamic algorithm with $\epsApx < \frac1{24}$. Alice asks $\mathcal{A}$ to add every element of $S$, then she sends a snapshot of its memory to Bob, which resumes the execution of $\scA$. Next, Bob asks $\scA$ to perform $\del(\bx, y)$ for every $y \in \{0,1\}$ and every $\bx \in \{0,1\}^{D+\bar D}$ terminating with the $\bar D$-bits binary string $\pmb{b}_i$, for all $i \in [N + D] \setminus \{\kappa, N + \ell\}$. Finally, Bob asks $\mathcal{A}$ to label $\pmb{1}$, and outputs the answer.

First, we claim that Bob outputs $A_{\kappa\ell}$. To prove this, note that the active set received by $\scA$ is:
\begin{align*}
    \widehat S = \left\{(\bx_{ \kappa }^h,y_{ \kappa}^h) : h \in [k] \right\} \cup \left\{(\bx_{ N+\ell }^h, y_{N +\ell}^h) : h \in [k]\right\} 
\end{align*}
We prove that any decision tree that is $\beps$-feasible with respect to $\widehat S$ labels the example $\pmb{1}$ with $A_{\kappa \ell}$. To this end we show that in any such tree, (i) the root splits on feature $\ell$, (ii) the child $v$ of the root corresponding to feature $\ell$ equal to $1$ is a leaf with label $A_{\kappa\ell}$. For (i), we prove that $\widehat S$ does not meet any stopping condition, and that $j$ is the only $\epsApx$-optimal feature. The claim on the stopping condition is immediate. For the optimality of $\ell$, we claim that:
\begin{align}
    G(\widehat S, j) =
\begin{cases}
\nicefrac{1}{6},& j=\ell\\
\nicefrac{1}{8},& j \in [D] \setminus \{\ell\}\\
0, &\text{otherwise}
\end{cases}
\end{align}
To begin, note that $g(\widehat S)=\frac{1}{2}$. Let $\widehat S_1$ and $\widehat S_2$ be the two subsequences obtained by splitting $\widehat S$ on $j$. When $j \in \ell$, $\widehat S_1$ contains $k$ examples with identical labels, so $g(\widehat S_1)=0$, while $\widehat S_2$ contains $3k$ examples of which $2k$ have identical label, so $g(\widehat S_2)=\frac{4}{9}$. Thus $G(\widehat S,\ell) = \frac{1}{2} - \left(\frac{1}{4}\cdot 0 + \frac{3}{4} \cdot \frac{4}{9}\right) = \frac{1}{6}$.
When $j \in [D] \setminus \{\ell\}$, both $\widehat S_1$ and $\widehat S_2$ contain $k$ examples of $\left\{(\bx_{ \kappa }^h,y_{ \kappa}^h) : h \in [k] \right\}$ with identical label, as well as $k$ examples of $\left\{(\bx_{N+\ell}^h,y_{N+\ell}^h) : h \in [k]\right\}$ with $\frac{k}{2}$ labels to $0$ and $\frac{k}{2}$ labels to $1$. Thus, in both $\widehat S_1$ and $\widehat S_2$ one label occurs precisely on a fraction $\frac{3}{4}$ of the examples. Hence $g(\widehat S_1)=g(\widehat S_2)=2 \cdot \frac{3}{4} \cdot \frac{1}{4} = \frac{3}{8}$, and $G(\widehat S,j) = \frac{1}{8}$. 
In every other case, either $\widehat S_1=\widehat S$, or $\widehat S_1=\left\{(\bx_{ \kappa }^h,y_{ \kappa}^h) : h \in [2k] \right\}$ and $\widehat S_2=\left\{(\bx_{N+\ell}^h, y_{N+\ell}^h) : h \in [2k]\right\}$, which implies $g(\widehat S_1)=g(\widehat S_2)=\frac{1}{2}$ and $G(\widehat S,j)=0$.
Since $\epsApx < \frac{1}{24} = \frac{1}{6}-\frac{1}{8}$, we conclude that $\ell$ is the only $\epsApx$-optimal feature, as desired.

For (ii), note that the subsequence of $\widehat{S}$ having the $\ell$-th feature set to $1$ has all labels equal to $A_{\kappa\ell}$. This implies that the corresponding child $v$ of the root is a leaf, since it meets at least one of the stopping conditions, and that it assigns label $A_{\kappa\ell}$. This proves that Bob returns $A_{\kappa\ell}$.

To prove the space lower bound, note that $S$ consists of $n=2k(N+D)$ examples, each of which can be encoded in $d=D+\bar D = O(D + \log(D+N))$ bits. For $D = O(N)$, this yields $n=O(kN)$ and $d=O(D \log N)$ and therefore $ND = \Omega\big(\frac{nd}{k\log n}\big)$. Recalling that Alice must send $\Omega(ND)$ bits to Bob concludes the proof.
\end{proof}

We conclude this section with a lower bound on the running time of any fully dynamic algorithm. Clearly, if the model requires the algorithm to read every labeled example $(x,y)$ upon arrival, then a lower bound of $\Omega(nd)$ is trivial. However, we show that an $\Omega(nd)$ bounds holds even if we do not require the algorithm to read the examples; instead, at any point in time we allow the algorithm  to access in time $\scO(1)$ the $j$-th feature of \emph{any} example in the current active set. We call this the \emph{matrix access model}. Again, we prove the bound for weakly $(\beps,\delta)$-feasible algorithms.
\begin{theorem}\label{thm:time_lb}
Let $k,h \ge 1$ and $\epsThr,\epsApx\in [0, \frac{1}{2})$. For arbitrarily large $n$ and $d$ there exist sequences of $n$ \ins\ and \del\ operations over $\{0,1\}^d \times \{0,1\}$ such that, in the matrix access model, any weakly $(\beps,\nicefrac{2}{3})$-feasible fully dynamic algorithm has expected running time $\Omega(nd)$.
\end{theorem}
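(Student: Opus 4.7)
My plan is to prove Theorem~\ref{thm:time_lb} by combining Yao's minimax principle with a blockwise amplification of the \ind\ reduction used in the proof of Theorem~\ref{thm:lb}, now exploited to lower-bound the number of cell probes (hence the running time in the matrix access model) rather than memory. The overall idea is that each of $\Theta(n/k^*)$ independent ``mini-\ind'' instances forces $\Omega(d)$ probes, summing to $\Omega(nd)$.

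First I would define a product distribution on input sequences: partition the $n$ insertions into $m = \Theta(n/k^*)$ blocks of $2k^*$ examples, each block built as in the proof of Theorem~\ref{thm:lb} around an independently chosen hidden feature $j_b^* \in [d]$ and a hidden bit $A_b \in \{0,1\}$. A handful of ``routing'' bits in the remaining features ensure that the blocks end up in distinct leaves of every $\beps$-feasible tree of height at most $h^*$, so inside each such leaf the structural analysis of Theorem~\ref{thm:lb} applies locally. To handle the relaxed condition $\epsApx < \tfrac12$ (instead of $\tfrac{1}{24}$ as in Theorem~\ref{thm:lb}), I would simplify the per-block construction so that the hidden feature $j_b^*$ has Gini gain exactly $\tfrac12$ while every other feature has gain $0$; this leaves a margin of $\tfrac12 > \epsApx$ and makes $j_b^*$ the unique feature meeting the feasibility requirement per block.

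Second, I would attach to each block $b$ a canonical query $\bx_b$ whose correct label under every $\beps$-feasible tree equals $A_b$. Since the algorithm is weakly $(\beps,\tfrac23)$-feasible, it must output $A_b$ on $\lab(\bx_b)$ with probability $\geq \tfrac23$ for each $b$. In the matrix access model every probe returns one bit of the hidden input, so the transcript of the algorithm's probes is effectively a one-way message to an encoder whose task is to transmit $(A_1,\ldots,A_m)$. Because the blocks are independent and each $A_b$ is hidden among $\Theta(d)$ candidate features within its block, the one-way \ind\ lower bound used in Theorem~\ref{thm:lb} yields, blockwise, an $\Omega(d)$ lower bound on the expected number of probes \emph{inside} each block; a direct-sum / Fano-type argument over the independent blocks then yields $\Omega(m\cdot d) = \Omega(nd)$ expected probes in total, hence $\Omega(nd)$ running time.

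Finally, I would close the argument with Yao's minimax principle: since every deterministic algorithm has expected running time $\Omega(nd)$ on this distribution, there exists a specific sequence in its support on which every randomized algorithm has expected running time $\Omega(nd)$, matching the statement of the theorem. The main obstacle I foresee is the direct-sum step: I need to rule out any clever amortization in which the algorithm uses information learned in block $b$ to save probes in block $b'$. This is exactly where the matrix access model becomes crucial, and it follows from the independence of the blocks in the chosen distribution: conditioned on block $b$'s reads, the hidden pair $(j^*_{b'}, A_{b'})$ in any other block $b'$ remains uniformly random, so the one-way \ind\ bound can be applied per block without loss.
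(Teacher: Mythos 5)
Your high-level target (forcing $\Omega(d)$ probes per hidden feature, repeated $\Theta(n/k)$ times) matches the paper, but the way you package the repetitions has a genuine gap. The paper does \emph{not} place all blocks in the active set simultaneously and argue a direct sum; it uses the deletions to present the blocks \emph{sequentially}: insert one $2k$-row gadget $M^{(j)}$ whose unique $\beps$-feasible tree is forced to split on the hidden feature $j$ at the root (gain $\nicefrac12$ for $j$, gain $0$ for every other feature, so the gap exceeds any $\epsApx<\nicefrac12$), issue a single $\lab(x)$ query on a random point with $\lceil d/2\rceil$ ones --- answering $\Ind_{\{x_j=1\}}$ with advantage $\delta$ requires probing $\Omega(\delta d)$ cells since $j$ is uniform and unknown --- then delete the $2k$ examples and repeat. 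Because the active set always consists of a single fresh gadget, no routing bits, no Yao/Fano machinery and no direct-sum lemma are needed, and the argument works for every $h\ge 1$ and every $\epsApx<\nicefrac12$.

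Your simultaneous-blocks construction breaks on exactly the points the sequential construction is designed to avoid. First, the theorem must hold for any pruning depth $h\ge 1$, e.g.\ $h=1$: a feasible tree then has at most two leaves, so $m=\Theta(n/k)$ blocks cannot be routed to distinct leaves, and your per-leaf analysis never gets off the ground. Second, even for large $h$, ``a handful of routing bits'' cannot force every $\beps$-feasible tree to separate the blocks when $\epsApx$ is close to $\nicefrac12$: to make a routing split the \emph{unique} $\epsApx$-optimal choice at a node you need its gain to exceed all others by more than $\epsApx$, hence to be close to $\nicefrac12$, which forces the two children to be nearly label-pure; but each block must arrive at its leaf with balanced labels (that is what gives its hidden feature gain $\nicefrac12$ there), so an ancestor containing that block cannot have nearly pure children. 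Conversely, if the routing bits are label-balanced they have gain $0$ at the root, all root gains are $O(1/m)$, every split is $\epsApx$-optimal, and feasible trees are free to mix blocks arbitrarily, in which case the canonical queries are no longer forced to return $A_b$. The fix is the paper's: keep only one gadget alive at a time via \del, which also supplies the $\ins/\del$ sequence of length $n$ promised in the statement and removes the need for the direct-sum step you flagged as the main obstacle.
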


\section{Experiments}
We compare \algo\ against two state-of-the-art algorithms for incremental decision tree learning, EFDT \cite{Manapragada18} and HAT \cite{10.1007/978-3-642-03915-7_22}, using the MOA software \cite{DBLP:journals/jmlr/BifetHKP10}. Similarly to \algo, EFDT and HAT aim at keeping $\epsilon$-optimal splits, which they do with high probability when the examples are i.i.d. from a distribution. The results for HAT and other experiments can be found in Appendix~\ref{sec:addexp}.

\noindent\textbf{Settings.} We implemented \algo\ in C++; the source code is available as supplemental material and will be released as open-source. We conducted all experiments on an Ubuntu 20.04.2 LTS server equipped with 144 Intel(R) Xeon(R) Gold 6154 @ 3.00GHz CPUs and 264 GB of RAM. We observe that the algorithms have not been implemented in the same programming language, which limits the relevance of the runtime comparison.

\noindent\textbf{Datasets.} Our datasets are shown in Table~\ref{tab:datasets_table}. We have chosen them among standard datasets for classification; some of them, such as INSECTS, feature the so-called concept drift. Not all datasets have binary labels. For the INSECTS datasets, we assigned label $1$ to  the union of \texttt{male} classes. For every other dataset, we assigned label $1$ to the majority class.

\noindent\textbf{Input models.} We consider three input models. Let $(x_1,y_1), \dots ,(x_T,y_T)$ be the sequence of examples as given by the dataset at hand (typically in chronological order). The simplest model is when only insertions are allowed aka \emph{incremental} model. 
Formally, at every $t \in [T]$ the algorithm receives $\ins(x_t,y_t)$. This model is supported by all algorithms (\algo, EDFT, HAT), hence we use it to compare them against each other.
The next two models involve deletions and thus are supported only by \algo. The first one is the \emph{sliding window} model (SW): given an integer $W \ge 1$ for all $t \in [T]$ the algorithm receives $\ins(x_t,y_t)$, preceded by $\del(x_{t-W+1},y_{t-W+1})$ if $t \ge W$. The second one is the \emph{random update} model (RU): for all $t \in [T]$, with probability $\nicefrac12$ the algorithm receives $\ins(x_t,y_t)$ and with probability $\nicefrac12$ it receives $\del(x,y)$ where $(x,y)$ is chosen uniformly at random from the active set $S^t$. 

\noindent\textbf{Metrics.} As is customary in the literature, we evaluate how well each algorithm predicts the label of the next example before ``seeing'' it. Formally, if $(x_t,y_t)$ is the $t$-th example appearing in the input sequence, then we compute $\hat y_t = \lab(x_t)$ \emph{before} the algorithm sees $(x_t,y_t)$. We then compute the F1-score of the label sequence $\hat\by=(\hat y_t)_{t \ge 1}$ against the ground-truth $\by=(y_t)_{t \ge 1}$,
\begin{align}
    \text{F1}(\by,\hat\by) = \frac{2 \cdot P(\by,\hat\by)\cdot R(\by,\hat\by)}{P(\by,\hat\by)+R(\by,\hat\by)}
\end{align}
where $P(\by,\hat\by)$ and $P(\by,\hat\by)$ are respectively the precision and recall of $\hat\by$ against $\by$. The F1-score is in $[0,1]$ with higher values denoting better results.  

\noindent\textbf{Parameters.} 
For \algo, we let $\epsThr = 0$, $\epsApx = 0$ $k = 1$, $h \in \{5,10\}$, and we manually set $\epsilon \in [0,2]$. Note that it breaks the condition of theorem \ref{thm:main_UB}. It allows us to test the effect of $\epsilon$ without fine-tuning of the other parameters.
The parameters of EFDT and HAT are set to the original values specified by the authors; we only vary the so-called grace period in $\{100, 500, 1000\}$ to find the value yielding highest F1-score. For the SW model we use $W\in\{100,1000\}$.  In all our experiments, we first build a decision tree for the first $W$ examples, then we apply the models above to the remaining sequence. Several parameter configurations show similar trends. We only report the most interesting results here; see Appendix~\ref{sec:addexp} for more.

\noindent \textbf{\algo\ versus EFDT.} We compare the F1-scores of EFDT and \algo\ when allowed the same amortized time. To this end we tuned \algo's $\epsilon$ to make its running time very close to (and never exceeding) that of EFDT. The results are shown in Table~\ref{tab:EFDT_FDDT}; remarkably, \algo\ outperforms consistently EFDT in terms of F1-score. One of the possible reasons is that \algo\ can guarantee to be relatively close to the optimal Gini gain, even without computing it explicitly. In contrast, EFDT resorts to an approximation which might be relatively poor, given that maintaining an optimal Gini gain is expensive. 

\noindent \textbf{\algo\ on SW and RU.} Next, we studied the performance of \algo\ in the SW and RU models (recall that EFDT/HAT do not work here). For the SW model, we set $h=10$, $k=1$, $\epsThr=0$, and $W=100$ for Electricity and $W=1000$ otherwise. Figure~\ref{fig:SW} shows the F1 score as a function of $\epsilon$ (subfigures a-c) and the average time per update in milliseconds in logarithmic scale as a function of $\epsilon$ (subfigures d-f). The smaller $\epsilon$ is, the more often subtrees are recomputed, yielding a higher F1 score and amortized running time. This behavior is clear in the Electricity and Poker datasets, where from $\epsilon=0$ to $\epsilon=1$ the F1 score decreases by roughly $0.1$ and the running time increases by three orders of magnitude. A good tradeoff could be $\epsilon=0.1$, where the F1-score is close to that of $\epsilon=0$ but with an amortized running time per update smaller by orders of magnitude ($\approx 0.5$ms). For INSECTS the F1-score is much more stable. All other datasets and parameter settings yielded very similar qualitative behaviors. Figure~\ref{fig:RU} shows the average running time for the RU model, showing similar trends to Figure~\ref{fig:SW}. 

\begin{table}
    \centering
    \begin{tabular}{llll}
        \toprule
         & $d$ & \# of examples & 1-class \\
        \midrule
        Electricity & 8 & \numprint{45311} & UP \\
        Forest Covertype & 54 & \numprint{581011} & 2 \\
        INSECTS v1-v5 & 33 & \numprint{24150} -- \numprint{79986} & *-male \\
        KDDCUP99 & 41 & \numprint{494021} & smurf. \\
        NOAA Weather & 8 & \numprint{18159} & 1 \\
        Poker & 10 & \numprint{829201} & 0 \\
        \bottomrule
    \end{tabular}
    \caption{Datasets statistics.}
    \label{tab:datasets_table}
\end{table}

\begin{table}
    \centering
    \begin{tabular}{llllll}
        \toprule
         & \multicolumn{2}{c}{EFDT} & \multicolumn{2}{c}{\algo} & \\
         & RT & F1 & RT & F1 & $\epsilon$ \\
        \midrule
        Electricity & 1.65 & 83.64 & 1.53 & \textbf{90.33} & 0.15 \\
        Forest Covertype & 42.47 & 83.64 & 42.37 & \textbf{90.33} & 0.29 \\
        INSECTS v1 & 4.85 & 88.96 & 4.51 & \textbf{92.17} & 1.00 \\ 
        INSECTS v2 & 3.13 & 87.40 & 3.09 & \textbf{92.53} & 0.92 \\
        INSECTS v3 & 7.54 & 92.51 & 7.43 & \textbf{94.76} & 1.00 \\
        INSECTS v4 & 6.30 & 91.15 & 6.02 & \textbf{91.91} & 0.95 \\
        INSECTS v5 & 6.84 & 89.85 & 6.77 & \textbf{93.34} & 1.00 \\ 
        KDDCUP99 & 17.72 & 97.98 & 17.43 & \textbf{99.91} & 0.17 \\
        NOAA Weather & 0.73 & 80.78 & 0.73 & \textbf{81.43} & 0.36 \\
        Poker & 16.26 & 79.69 & 16.14 & \textbf{86.07} & 1.03 \\
        \bottomrule
    \end{tabular}
    \caption{Running time in seconds (labeled \textit{RT}) and F1-score (labeled \textit{F1}) of EFDT and \algo\ in the incremental model. The last column shows the value of $\epsilon$ in \AlgoUpdate.}
    \label{tab:EFDT_FDDT}
\end{table}

\newcommand{\mylen}{.4\textwidth}
\begin{figure}[h!] 
    \centering\small
    \begin{subfigure}[b]{\mylen}
         \centering
         \includegraphics[width=\textwidth]{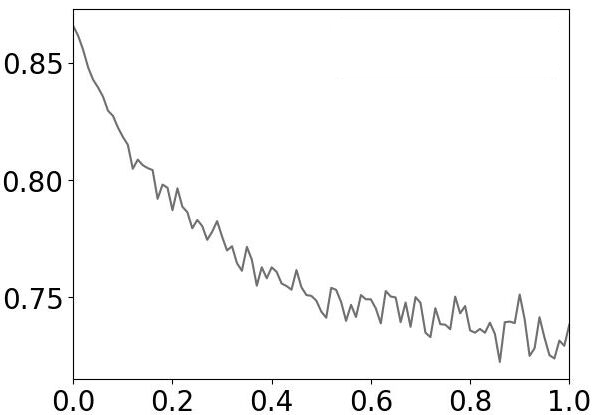}
         \label{fig:f1_elec}    
    \end{subfigure}~~
    \begin{subfigure}[b]{\mylen}
         \centering
         \includegraphics[width=\textwidth]{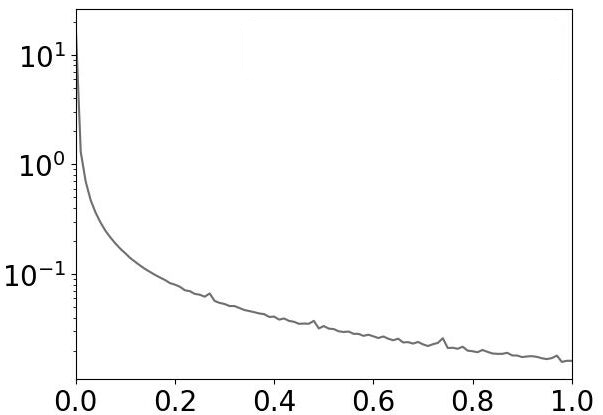}
         \label{fig:iter_elec}    
    \end{subfigure}
    \\
    \begin{subfigure}[b]{\mylen}
         \centering
         \includegraphics[width=\textwidth]{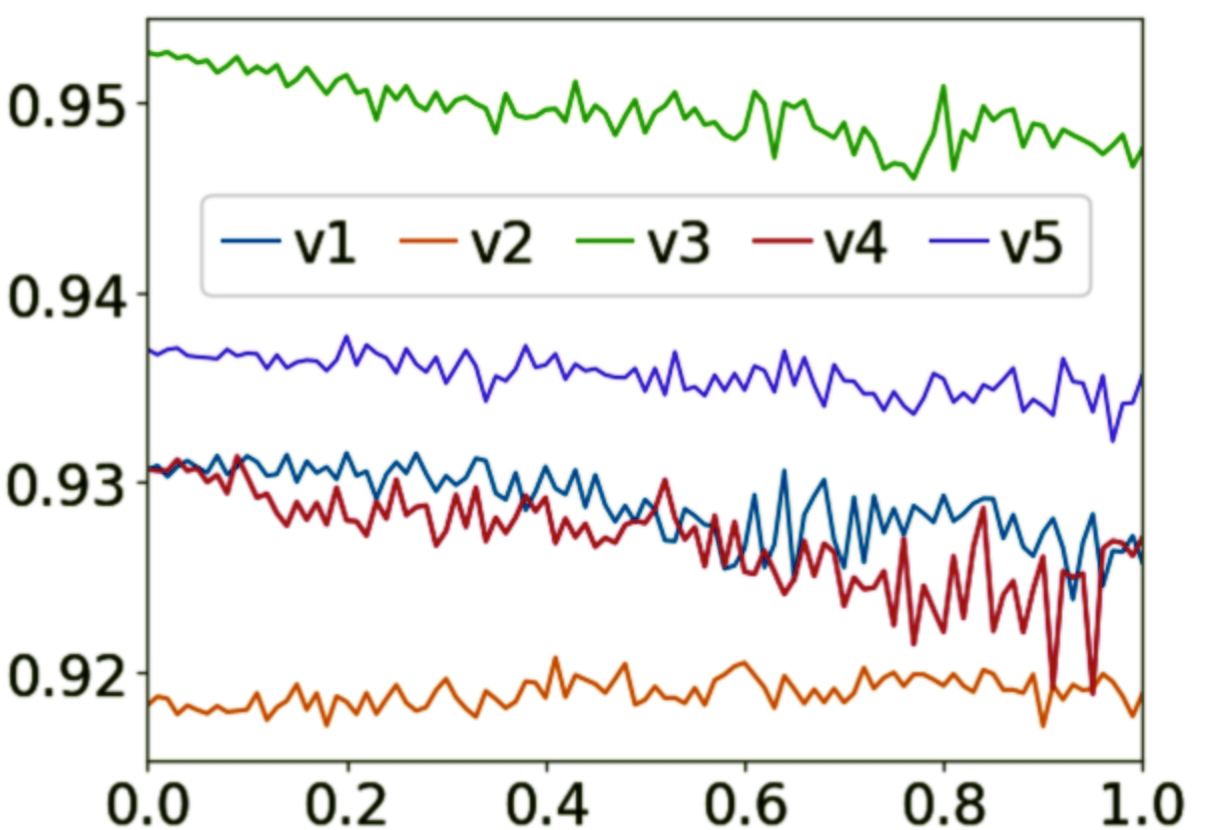}
         \label{fig:f1_insects}    
    \end{subfigure}~~
    \begin{subfigure}[b]{\mylen}
         \centering
         \includegraphics[width=\textwidth]{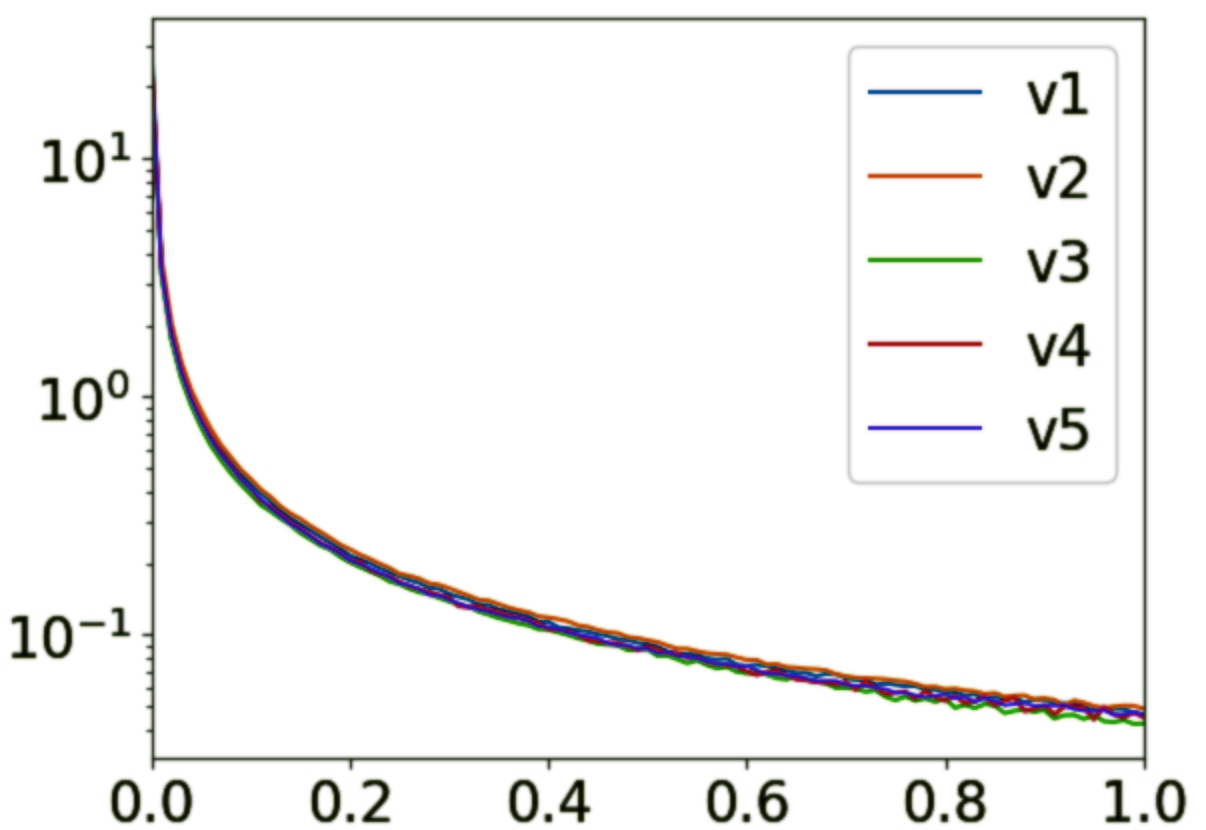}
         \label{fig:iter_insects}    
    \end{subfigure}
    \\
    \begin{subfigure}[b]{\mylen}
         \centering
         \includegraphics[width=\textwidth]{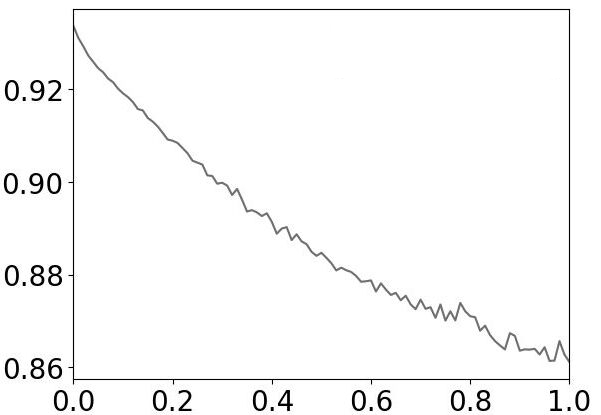}
         \label{fig:f1_poker}    
    \end{subfigure}~~
    \begin{subfigure}[b]{\mylen}
         \centering
         \includegraphics[width=\textwidth]{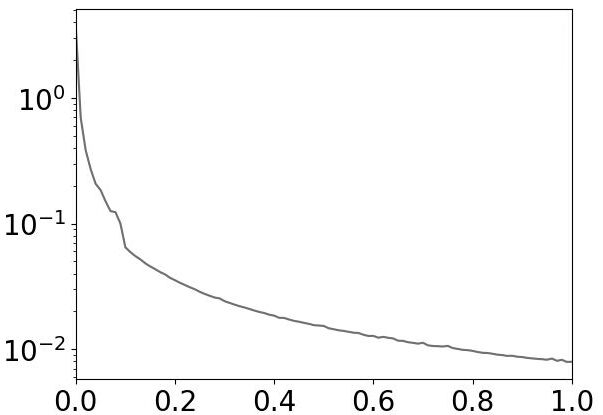}
         \label{fig:iter_poker}    
    \end{subfigure}
\caption{Performance of \algo\ in the SW model on the Electricity, INSECTS and Poker datasets (top to bottom), in terms of F1-score (left) and amortized milliseconds per update (right) as a function of $\epsilon$.}\label{fig:SW}
\vspace*{10pt}
\end{figure}
\begin{figure}[h!]
\vspace*{10pt}
    \centering
    \hfill
    \begin{subfigure}[b]{\mylen}
         \centering
         \includegraphics[width=\textwidth]{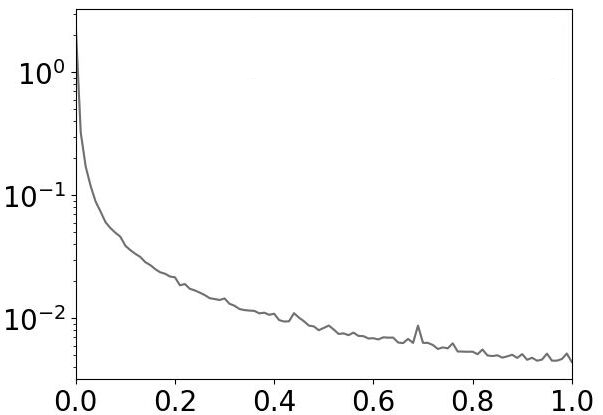}
         \label{fig:ru_elec}    
    \end{subfigure}
    \hfill
    \begin{subfigure}[b]{\mylen}
         \centering
         \includegraphics[width=\textwidth]{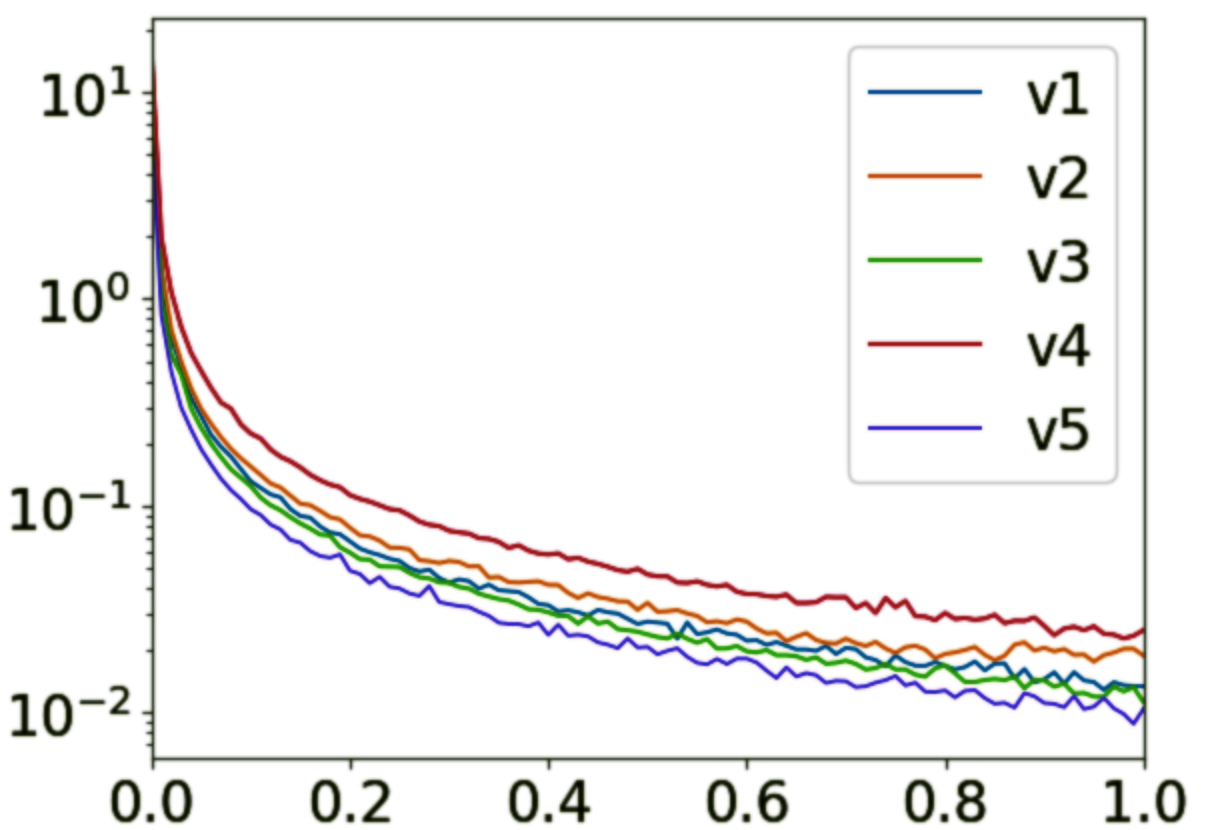}
         \label{fig:ru_insects}    
    \end{subfigure}
    \hfill\\
\caption{Amortized running time of \algo\ in the RU model on the Electricity (left) and INSECTS (right) datasets. Similar results are obtained for the Poker dataset, which are omitted for space constraints.  }
\label{fig:RU}
\end{figure}

\section{Conclusions and Future Work}
We developed the first fully dynamic algorithm for maintaining $\epsilon$-feasible decision trees, and proved it to be nearly optimal in terms of space and amortized time. Our work shows that many well-known decision tree algorithms, whether offline like CART or incremental like EDFT, can be made fully dynamic with a small loss in the quality of the decision tree and a small overhead in the amortized running time. Our work leaves open the natural question of whether these results can be strengthened from amortized to worst-case. We believe this is an exciting direction for future research in fully-dynamic algorithms for supervised machine learning.

\section*{Acknowledgements}
The work of Gabriel Damay and Mauro Sozio  was partially supported by the French National Agency (ANR) under project APY (ANR-20-CE38-0011), while it has been carried out partially in the frame of a cooperation between Huawei Technologies France SASU and Telecom Paris (Grant no. YBN2018125164). Marco Bressan has been supported in part by a Google Focused Research Award ``Algorithms and Learning for AI'' (ALL4AI). We thank the anonymous reviewers for their careful reading of our manuscript and their many insightful comments and suggestions.

\bibliography{biblio.bib}

\clearpage
\appendix
\section{Appendix}

\subsection{Supplementary material for Section~\ref{sec:prelim}}\label{apx:gini}
We start with two ancillary results used in the proof of Lemma~\ref{lem:smoothness} below.
\begin{lemma}
\label{lem:smooth_gini}
Let $S,S' \in (\scX \times \{0,1\})^{\ge 1}$. If $\ED(S,S') \le 1$ then $|g(S)-g(S')| < \frac{2}{\max(|S|,|S'|)}$.
\end{lemma}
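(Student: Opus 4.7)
The plan is to handle the trivial case $S=S'$ separately and otherwise assume, without loss of generality, that $S'$ is obtained from $S$ by inserting a single labeled example $(x^\ast,y^\ast)$ with $y^\ast \in \{0,1\}$; the single-deletion case is symmetric upon swapping the roles of $S$ and $S'$. The key algebraic observation I would use is the factorization
\[
g(S) - g(S') \;=\; 2p_S(1-p_S) - 2p_{S'}(1-p_{S'}) \;=\; 2\,(p_S - p_{S'})\,\bigl(1 - p_S - p_{S'}\bigr),
\]
which follows from expanding the Gini index and applying $a-a^2-b+b^2 = (a-b)(1-(a+b))$. I would then bound the two factors separately.

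For the first factor, write $n = |S|$ and $s = \sum_{(x,y)\in S} y$, so that $p_S = s/n$ and $p_{S'} = (s + y^\ast)/(n+1)$. A direct computation gives
\[
p_S - p_{S'} \;=\; \frac{s - n\, y^\ast}{n(n+1)},
\]
and since $0 \le s \le n$ and $y^\ast \in \{0,1\}$, the numerator has absolute value at most $n$, whence $|p_S - p_{S'}| \le 1/(n+1) = 1/\max(|S|,|S'|)$. This step is just routine arithmetic.

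For the second factor, $|1 - (p_S + p_{S'})| \le 1$ because $p_S, p_{S'} \in [0,1]$. The one point that needs care — and what I expect to be the only real obstacle, since the rest is bookkeeping — is obtaining the \emph{strict} inequality. Equality in $|1-(p_S+p_{S'})|\le 1$ forces $p_S=p_{S'}\in\{0,1\}$, but then both Gini indices vanish and $|g(S)-g(S')|=0$ already satisfies the strict bound. Otherwise $|1-(p_S+p_{S'})|<1$, and multiplying this strict bound by the first factor bound yields $|g(S)-g(S')| < 2/\max(|S|,|S'|)$, which is exactly the claim.
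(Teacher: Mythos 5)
Your proof is correct, but it follows a genuinely different route from the paper's. The paper rewrites the Gini index as a normalized sum of pairwise label disagreements, $g(S)=\frac{2}{n^{2}}\sum_{1\le i<j\le n}(y_i-y_j)^2$, and argues that removing a single example affects at most $n-1$ terms, each of weight $\frac{2}{n^{2}}$, giving the bound $\frac{2(n-1)}{n^{2}}<\frac{2}{n}$; your argument instead works directly with the label proportions, using the factorization $g(S)-g(S')=2\,(p_S-p_{S'})\bigl(1-p_S-p_{S'}\bigr)$ together with the exact identity $p_S-p_{S'}=\frac{s-ny^{\ast}}{n(n+1)}$, and obtains strictness by the case analysis on $|1-(p_S+p_{S'})|=1$. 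Both are elementary and yield the same constant, but they buy slightly different things: your computation is fully explicit about the change of normalization from $|S|$ to $|S'|$ (a point the paper's term-counting sketch passes over silently, since the prefactor $\frac{2}{n^2}$ also changes after a deletion) and it delivers the strict inequality cleanly, whereas the paper's pairwise representation is more combinatorial and extends more readily, e.g.\ to a multi-class Gini index, without redoing the algebra. One cosmetic remark: when $p_S=p_{S'}$ with $|1-(p_S+p_{S'})|<1$, the strictness of the final inequality comes from the right-hand side $\frac{2}{\max(|S|,|S'|)}$ being positive rather than from multiplying a strict factor bound, but this degenerate case is trivially covered and does not affect correctness.
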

\begin{proof}
Without loss of generality assume $|S|\ge|S'|$, let $n=|S|=\max(|S|,|S'|)$, and write $S=(\bx_1,y_1),\ldots,(\bx_n,y_n)$.
Now observe that $g(S) =  \sum_{1\le i < j \le n} \frac{2}{n^2} (y_i - y_j)^2$. By deleting one example from $S$, the summation changes in at most $n-1$ terms, whose sum is bounded from above by $\frac{2}{n^2}(n-1) < \frac{2}{n} = \frac{2}{\max(|S|,|S'|)}$.
\end{proof}
\begin{lemma}\label{lem:gini_lips}
Let $S,S' \in (\R^d,\{0,1\})^{\ge 1}$. If $\ED(S,S') \le 1$, then for any $j \in [d]$ and any $t \in \R$,
\begin{align}
\Big|g(S[x_j \le t]) - g(S'[x_j \le t])\Big| \le \frac{12}{\max(|S|,|S'|)}
\end{align}
and the same holds with $>$ in place of $\le$.
\end{lemma}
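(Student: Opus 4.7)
My plan is to reduce the claim to the unrestricted Gini smoothness already established in Lemma~\ref{lem:smooth_gini}. Write $T := S[x_j \le t]$ and $T' := S'[x_j \le t]$. Because $\ED(S, S') \le 1$, the multisets $S$ and $S'$ coincide up to a single labeled example $(\bx, y)$, and the first step is to split on whether this example lies in $T$ or in its complement.

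If $\bx_j > t$ (or no edit at all), then $T = T'$ and $|g(T) - g(T')| = 0$, which is trivially bounded. In the only nontrivial case $\bx_j \le t$ the edit on $S$ is inherited by $T$, so $\ED(T, T') = 1$, and Lemma~\ref{lem:smooth_gini} applied to the pair $(T, T')$ immediately gives
\begin{align*}
|g(T) - g(T')| < \frac{2}{\max(|T|, |T'|)}.
\end{align*}
The analogous claim with `$>$' in place of `$\le$' will then follow by running exactly the same argument on the complementary subsets $S[x_j > t]$ and $S'[x_j > t]$, which also differ by at most one element.

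All that remains is to convert the denominator from $\max(|T|, |T'|)$ into $\max(|S|, |S'|)$. The natural route is a case split on relative sizes: whenever $\max(|T|, |T'|) \ge \tfrac{1}{6}\max(|S|, |S'|)$ one has $\tfrac{2}{\max(|T|, |T'|)} \le \tfrac{12}{\max(|S|, |S'|)}$ and we are done, which is exactly what the constant $12$ in the statement is designed to absorb. I expect the main obstacle to be the complementary regime, where $T$ and $T'$ are tiny compared to $S$ and $S'$ and the bound from Lemma~\ref{lem:smooth_gini} alone is too weak. Here I would combine the universal bound $g(\cdot) \le 1/2$ (which already closes the range $\max(|S|, |S'|) \le 24$) with a direct closed-form expansion of $g(T) - g(T')$ in terms of the two label counts of $T$, using that a single edit modifies exactly one of those counts by $\pm 1$; the goal is to show that this explicit expression is small enough whenever $|T|, |T'|$ are small, producing the extra slack needed to replace $\max(|T|, |T'|)$ by $\max(|S|, |S'|)/6$ in the denominator.
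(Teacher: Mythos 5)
Your reduction handles the easy cases correctly (an edit on the $x_j>t$ side gives $T=T'$, and when $\max(|T|,|T'|)\ge\tfrac16\max(|S|,|S'|)$ Lemma~\ref{lem:smooth_gini} indeed suffices), but the step you yourself flag as the main obstacle is a genuine gap, and it cannot be closed: under the literal reading of the statement, the small-$T$ regime is simply false. Take $S$ of size $n$ in which exactly two examples satisfy $x_j\le t$, one with label $0$ and one with label $1$, and let $S'$ be obtained by deleting the label-$1$ one. Then $\ED(S,S')=1$, $g(S[x_j\le t])=\tfrac12$ and $g(S'[x_j\le t])=0$, so the left-hand side equals $\tfrac12$ while the claimed bound is $\tfrac{12}{n}$, which fails for every $n>24$. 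A single edit can change the Gini index of a two-element cell by $\tfrac12$ no matter how large $S$ is, so no closed-form expansion in the label counts of $T$ can yield a bound that improves as $\max(|S|,|S'|)$ grows; the extra slack you hope for does not exist.

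What the paper's proof actually establishes (and what is used afterwards in the proof of Lemma~\ref{lem:smoothness}) is the corresponding bound for the Gini \emph{gain} $G(S,j,t)$ --- the lemma's wording is loose on this point. In the gain, the subset Gini indices enter only through the weighted terms $\frac{|S_-|}{|S|}g(S_-)$ and $\frac{|S_+|}{|S|}g(S_+)$, and the paper applies the triangle inequality to $G(S,j,t)-G(S',j,t)$, bounding $|g(S)-g(S')|\le\frac{2}{\max(|S|,|S'|)}$ via Lemma~\ref{lem:smooth_gini} and each weighted term by $\frac{5}{\max(|S|,|S'|)}$, giving $12$ in total. The weight $\frac{|S_-|}{|S|}$ is precisely what rescues the regime where your plan gets stuck: when $S_-$ is tiny relative to $S$, its Gini index may jump by a constant, but that jump is multiplied by a vanishing weight. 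If you restate the claim for $G(S,j,t)$ (or for the weighted quantities) and carry the weights through your case analysis, your argument essentially becomes the paper's; as written, the final step of your proposal cannot be repaired.
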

\begin{proof}
For brevity define $S_- = S[x_j \le t]$, $S_+ = S[x_j > t]$, $S'_- = S'[x_j \le t]$, $S'_+ = S'[x_j > t]$.
By definition of gain, by the triangle inequality, and by rearranging terms, we have:
\begin{align}
\Big|g(S'_-) - g(S_-)\Big| & \le \left| g(S') - g(S) \right| \nonumber \\&+ \left|\frac{|S'_-|}{|S'|} g(S'_-) -  \frac{|S_-|}{|S|} g(S_-) \right|\nonumber \\&+ \left|\frac{|S'_+|}{|S'|} g(S'_+) - \frac{|S_+|}{|S|} g(S_+)  \right|
\label{eq:GRS}
\end{align}
By Lemma~\ref{lem:smooth_gini}, the first term is bounded by $\frac{2}{\max(|S|,|S'|)}$. For the second term, we have:
\begin{align}
    \frac{|S'_-|}{|S'|} g(S'_-)
    & \le \frac{|S'_-|}{|S'|}\! \left(g(S_-) + \frac{2}{\max(|S_-|,|S'_-|)}\right)\label{eq:A1_1}
    \\& \le \frac{|S_-|}{|S'|}g(S_-) + \frac{2}{|S'|} 
    \\& \le \frac{|S'_-|+1}{|S'|+1} g(S_-) + \frac{2}{|S'|} 
    \\& \le \frac{|S_-|+2}{|S|} g(S_-) + \frac{2}{|S'|} \label{eq:A1_2}
    \\& \le \frac{|S_-|}{|S|} g(S_-) + \frac{1}{|S|} + \frac{2}{|S'|} \label{eq:A1_3}
    \\& \le \frac{|S_-|}{|S|} g(S_-) + \frac{5}{\max(|S|,|S'|)} \label{eq:A1_4}
\end{align}
where~\eqref{eq:A1_1} follows from Lemma~\ref{lem:smooth_gini},~\eqref{eq:A1_1} from $ \ED(S'_-,S_-), \ED(R,S)\le 1$,~\eqref{eq:A1_3} from $g(\cdot) \le \frac{1}{2}$, and~\eqref{eq:A1_4} from $\frac{1}{2} \le \frac{|S|}{|S'|} \le 2$.
By exchanging $S$ and $R$,~\eqref{eq:A1_1}--\eqref{eq:A1_4} also implies that $\frac{|S_-|}{|S|} g(S_-) \le \frac{|S'_-|}{|S'|} g(S'_-) + \frac{5}{\max(|S|,|S'|)}$. Together, these two inequalities imply that the second term in the right-hand side of~\eqref{eq:GRS} is bounded by $\frac{5}{\max(|S|,|S'|)}$. A similar argument gives the same bound for the third term. Thus, from~\eqref{eq:GRS}:
\begin{align}
    |g(S,j,t) - g(S',j,t)| \le \frac{2}{\max(|S|,|S'|)} \\+ 2 \cdot \frac{5}{\max(|S|,|S'|)} \\= \frac{12}{\max(|S|,|S'|)}
\end{align}
The argument is symmetric for $>$, concluding the proof.
\end{proof}

\subsubsection{Proof of Lemma~\ref{lem:smoothness}.}
Let us prove the claim on the Gini gain. If $\EDR(S,S') \ge \frac{1}{25}$, then $12.5 \EDR(S,S') \ge \frac{1}{2}$, and the claim holds since $G \in [0,.5]$. Suppose then  $\EDR(S,S') < \frac{1}{25}$. Let $\ED(S,S') = \EDR(S,S') \max(|S|,|S'|)$. Then there is a sequence of labeled sequences $S_0,\ldots,S_{\ell}$  with $\ell = \ED(S,S')$ such that $S=S_0$, $S_{\ell}=S'$, and $\ED(S_{i},S_{i+1}) = 1$ for all $i=0,\ldots,\ell-1$. By Lemma~\ref{lem:gini_lips}, $|g(S_{i-1},j,t) - g(S_i,j,t)| \le \frac{12}{\max(|S_{i-1}|,|S_i|)}$. Moreover, $|S_i| \ge |S| - \ED(S,S')$ and $|S_i| \ge |S'| - \ED(S,S')$, which implies:
\begin{align}
    |S_i| &\ge \max(|S|,|S'|)-\ED(S,S') \\&\ge (1-\EDR(S,S')) \max(|S|,|S'|) \\&> \frac{24}{25} \max(|S|,|S'|)
\end{align}
Then:
\begin{align}
    |g(S,j,t)\!-\!g(S'\!,j,t)| &\le \sum_{i=0}^{\ell-1}\big|g(S_i,j,t)\!-\!g(S_{i+1},j,t)\big| \label{eq:L2.2_1}
    \\&\le \sum_{i=0}^{\ell-1} \frac{12}{\max(|S_i|,|S_{i+1}|)} \label{eq:L2.2_2} 
    \\&\le \sum_{i=0}^{\ell-1} \frac{12}{\frac{24}{25}\max(|S|,|S'|)} \label{eq:L2.2_3}
    \\&\le \ED(S,S') \frac{12.5}{\max(|S|,|S'|)} \label{eq:L2.2_4}
    \\&= 12.5\, \EDR(S,S')
\end{align}
where~\eqref{eq:L2.2_1} follows from the triangle inequality,~\eqref{eq:L2.2_2} from Lemma~\ref{lem:gini_lips},~\eqref{eq:L2.2_3} from above, and ~\eqref{eq:L2.2_4} from $\ell = \ED(S,S')$. This concludes the proof.

For the Gini index the proof is identical, except for the constants and for Lemma~\ref{lem:gini_lips} in place of Lemma~\ref{lem:smooth_gini}. If $\EDR(S,S') \ge \frac{1}{5}$ then $2.5 \EDR(S,S') = \frac{1}{2}$ and we are done as $g \in [0,0.5]$. Otherwise consider the sequence $S_0,\ldots,S_{\ell}$ and, like above, note that $|S_i| > \frac{4}{5} \max(|S|,|S'|)$. A chain of inequalities similar to the one above yields the result.

\subsection{Supplementary material for Section~\ref{sec:dyn}}
\label{apx:dyn}
\begin{lemma}\label{lem:update_correct}
The tree computed by \AlgoUpdate\ over any sequence of update requests $U=o_1(\bx_1,y_1),\ldots,o_t(\bx_t,y_t)$ is $\beps$-feasible w.r.t.\ the active sequence $S^t$.
\end{lemma}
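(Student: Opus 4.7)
The plan is to fix an arbitrary time $t$ and an arbitrary vertex $v$ of the current tree, and to verify each of the three conditions of Definition~\ref{def:eps_feasibility} by combining the smoothness of the Gini quantities (Lemma~\ref{lem:smoothness}) with the rebuild invariants maintained by \AlgoUpdate.

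The first step is to control the drift $\EDR(S^{t_0}_v, S^t_v)$, where $t_0 \le t$ is the most recent time at which $v$ was created by an invocation of \AlgoBuild. At time $t_0$, lines~\ref{line:creation}--\ref{line:sv} set $s^{t_0}(v) = |S^{t_0}_v|$ and $c^{t_0}(v)=0$, and from then on $s(v)$ is never modified, so $s^t(v) = |S^{t_0}_v|$. Because no rebuild has been triggered at $v$ between times $t_0$ and $t$, the check at line~\ref{line:if_c_eps_s} gives $c^t(v) \le \epsilon \cdot s^t(v)$. Since $c^t(v)$ counts exactly the update requests whose root-to-leaf path passes through $v$, one has $\ED(S^{t_0}_v, S^t_v) \le c^t(v) \le \epsilon |S^{t_0}_v|$, and therefore $\EDR(S^{t_0}_v, S^t_v) \le \epsilon$. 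This is the inequality that drives all three checks.

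Given this drift bound, condition~2 of $\beps$-feasibility is the easiest: at time $t_0$, the split $\sigma_v=(j_v,a_v)$ was chosen to maximise $G(S^{t_0}_v,\cdot,\cdot)$, so applying Lemma~\ref{lem:smoothness}(1) twice — once to $(j_v,a_v)$ and once to an arbitrary competitor $(j',a')$ — yields $G(S^t_v, j_v, a_v) \ge G(S^t_v, j', a') - O(\epsilon)$, which the bound $\epsilon < \epsApx/12.5$ converts into the required $\epsApx$-gap. Condition~3 is also immediate: \AlgoUpdate\ keeps the leaf dictionary $D^L_v$ in sync with the label multiset of $S^t_v$ and reassigns $L_v$ to a majority label of $D^L_v$ at every update, so this condition holds by construction.

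Condition~1 is where the main obstacle lies, since one must argue that the ``leaf vs.\ internal'' status chosen at $t_0$ is still correct for $S^t_v$, even though $S^t_v$ has drifted. I would split on whether $v$ is currently internal or a leaf. If $v$ is internal, \AlgoBuild\ (line~\ref{line:if_split}) guarantees $|S^{t_0}_v|>k$, $g(S^{t_0}_v)>\epsThr/2$, and $\depth_T(v)<h$; the constraint $\epsilon < 1/(k+1)$ combined with $\EDR\le\epsilon$ propagates $|S^t_v|>k$, part~2 of Lemma~\ref{lem:smoothness} with $\epsilon < \epsThr/5$ propagates $g(S^t_v)>0$, and the depth is unchanged. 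If $v$ is a leaf, then one of the stopping conditions held at $t_0$: in the size case, $|S^t_v| \le |S^{t_0}_v|+c^t(v) \le k+\epsilon k < k+1$ forces $|S^t_v|\le k$; in the Gini case, Lemma~\ref{lem:smoothness}(2) with $\epsilon<\epsThr/5$ gives $g(S^t_v)<\epsThr$, fulfilling the contrapositive of the ``else if'' clause; the depth case is trivial. Combining the three conditions over every $v \in V(T)$ yields the lemma, and the bookkeeping of constants (matching each of the three thresholds in Theorem~\ref{thm:main_UB} to the corresponding use of Lemma~\ref{lem:smoothness}) is the technical core of the argument.
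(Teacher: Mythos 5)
Your proof is correct and takes essentially the same route as the paper's: the paper argues by contradiction that any violated condition of Definition~\ref{def:eps_feasibility} would force $\EDR(S_v^0,S_v) > \epsilon$, contradicting the invariant $c(v) \le \epsilon\, s(v)$ maintained by \AlgoUpdate, which is precisely the contrapositive of your direct drift bound $\EDR(S^{t_0}_v,S^t_v)\le\epsilon$ followed by the same case analysis through Lemma~\ref{lem:smoothness} and the thresholds $\frac{1}{k+1},\frac{\epsThr}{5},\frac{\epsApx}{12.5}$. The one loose point is condition~2, where two applications of the smoothness bound give a gap of about $24\epsilon$, which under $\epsilon<\epsApx/12.5$ is only below $2\epsApx$ rather than $\epsApx$; the paper's own proof is off by the same factor of two at the corresponding step (it deduces that the maximum of the two gain deviations exceeds $\epsApx$ when their sum does), so this is an imprecision inherited from the source rather than a gap specific to your argument.
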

\begin{proof}
Let $T$ be the decision tree currently held by \AlgoUpdate, let $S$ be the current active sequence. Suppose by contradiction that $T$ is not $\beps$-feasible w.r.t.\ $S$, so some $v \in V(T)$ violates one of the conditions of Definition~\ref{def:eps_feasibility}. Let $S_v^0$ be the sequence used by the most recent (and possibly nested) invocation of \AlgoBuild\ that constructed the subtree $T_v$. Note that the values of $c(v)$ and $s(v)$ currently held by \AlgoUpdate\ satisfy:
\begin{align}
    \frac{c(v)}{s(v)} \ge \frac{\ED(S_v^0,S_v)}{|S_v^0|} \ge \frac{\ED(S_v^0,S_v)}{\max(|S_v^0|,|S_v|)} = \EDR(S_v^0,S_v) \label{eq:cv_sv_lb}
\end{align}
We now consider the conditions of Definition~\ref{def:eps_feasibility} separately. We show that if $v$ violates any of them then $\EDR(S_v^0,S_v) > \epsilon$. By the inequality above this implies $c(v) > s(v) \cdot \eps$, which is absurd as \AlgoUpdate\ ensures $c(v) \le s(v) \cdot \eps$ at every instant. First, observe that condition (3) of Definition~\ref{def:eps_feasibility} is trivially satisfied by \AlgoUpdate, hence we need only consider conditions (1) and (2).

\textbf{Case 1:} $v$ violates condition (1). Then one of the following two subcases holds.\\
\textbf{Case 1a:} $v$ is a leaf such that $|S_v|>k$ and $g(S_v) \ge \epsThr$ and $\depth_T(v) < h$. Clearly the depth of $v$ was $\depth_T(v)$ also just before $T_v$ was constructed; therefore at that time $|S_v^0|\le k$ or $g(S_v^0) \le \frac{\epsThr}{2}$, otherwise $v$ would not have been a leaf according to \AlgoBuild. Now, if $|S_v^0|\le k$, then $\EDR(S_v,S_v^0) \ge \frac{1}{k}$ as $|S_v| > k$. If instead $g(S_v^0) \le \frac{\alpha}{2}$, then by Lemma~\ref{lem:smoothness} and since $g(S_v) \ge \epsThr$ we have $\EDR(S_v,S_v^0) \ge \frac{\epsThr}{5}$.
Thus $\EDR(S_v,S_v^0) \ge \min\left(\frac{1}{k},\frac{\epsThr}{5}\right)$.

\noindent \textbf{Case 1b:} $v$ is an internal vertex such that $|S_v|\le k$ or $g(S_v) = 0$ or $\depth_T(v)=h$. Clearly the depth of $v$ was $\depth_T(v)$ also just before $T_v$ was constructed; therefore at that time $|S_v^0| \ge k+1$ or $g(S_v^0) > \frac{\epsThr}{2}$, otherwise $v$ would have not been internal according to \AlgoBuild. The same arguments of case 1a show that $\EDR(S_v,S_v^0) \ge \min\big(\frac{1}{k+1},\frac{\epsThr}{5}\big)$.

\textbf{Case 2:} $v$ violates condition (2). Hence $G(S_v,j,a) < G(S_v,j^*,a^*) - \epsApx$, where $(j,a)$ is the split used by $v$ and $(j^*,a^*)=\arg\max_{j,a}\{G(S_v,j,a)\}$ is the optimal split. Since $(j,a)$ was chosen as a split, then it had maximal Gini gain for $S_v^0$, therefore $G(S_v^0,j,a) \ge G(S_v^0,j^*,a^*)$. Thus:
\begin{align}
    G(S_v,j^*,a^*) - G(S_v,j,a) &> \epsApx \\
    G(S_v^0,j^*,a^*) - G(S_v^0,j,a) &\le 0
\end{align}
and therefore
\begin{align}
    \left(G(S_v,j^*,a^*)-G(S_v^0,j^*,a^*)\right) \nonumber\\+ \left(G(S_v^0,j,a)-G(S_v,j,a)\right) &> \epsApx 
\end{align}
which implies
\begin{align}
    \max\left(\left|G(S_v,j^*,a^*)-G(S_v^0,j^*,a^*)\right|\right.,\nonumber\\ \left.\left|G(S_v^0,j,a)-G(S_v,j,a)\right| \right)> \epsApx 
\end{align}
By Lemma~\ref{lem:smoothness} this implies $\EDR(S_v^0,S_v) > \frac{\epsApx}{12.5}$.

Combining the bounds on $\EDR(S_v,S_v^0)$ and using the definition of $\epsilon$, we obtain:
\begin{align}
    \EDR(S_v,S_v^0) \ge \min\left(\frac{1}{k+1},\frac{\epsThr}{5},\frac{\epsApx}{12.5}\right) > \epsilon
\end{align}
which as noted above yields a contradiction.
\end{proof}

\subsubsection{Proof of Lemma~\ref{lem:counter}.}
Suppose by contradiction that $|\,|S^t(v)|-s^{t}(v)| > \epsilon \cdot s^{t}(v)$ and let $t_0 \le t$ be the last time a subtree containing $v$ was rebuilt.
Note that $c^t(v) \ge |\,|S^t_v|-s^t(v)|$ by construction of \AlgoUpdate, hence $c^t(v) > \epsilon \cdot s^{t}(v) > 0$. This implies $t_0 < t$, and since $c^{t_0}(v)=0 < \epsilon \cdot s^{t_0}(v)$, it also implies the existence of a smallest $\tau \in [t_0+1,t]$ such that $c^{\tau}(v) > \epsilon \cdot s^{\tau}(v)$. But then, by construction of \AlgoUpdate, at time $\tau$ \AlgoBuild\ was executed,  yielding a contradiction.

\subsubsection{Proof of Lemma~\ref{lem:log_height}.}
Fix any $v \in V(T)$ and let $(j,t)$ be its split. Let $u,z$ be the children of $v$, with $|S_u| \le |S_z|$, and let $\nu = |S_u|/|S_v|$. Using Definition~\ref{def:gini} and straightforward manipulations:
\begin{align}
    \gamma \le G(S_v,j,t) &\le g(S_v) - (0 + (1-\nu)g(S_z))
    \\ &\le g(S_v) - g(S_z) + \nu
\end{align}
As $|S_z| = (1-\nu) |S_v|$ then $\EDR(S_z, S_v) = \nu$ and Lemma~\ref{lem:smoothness} yields $g(S_v) - g(S_z) \le 2.5 \nu$. We conclude that $\gamma < 4 \nu$, i.e., $\nu > \gamma/4$. Therefore $\min(|S_u|,|S_z|) > \frac{\gamma}{4} |S_v|$. This holds for all $v$, thus $T$ has height $\scO(\log_{1+\gamma}|S|) = \scO \big( \frac{\log |S|}{\gamma} \big) $.

\subsubsection{Proof of Theorem~\ref{thm:categorical_ub}.}\label{sec:categ}
We show that, when all features are categorical, \AlgoBuild\ can be implemented in time $O(n d \log n)$ where $n=|S|$. The claim then follows by an argument similar to the one in the proof of Lemma~\ref{lem:amtime}.
The main idea is to maintain the following statistics and data structures: 
\begin{itemize}
    \item for every $y \in \{0,1\}$, the number $n(S,y)$ of examples in $S$ with label $y$
    \item for every $j \in [d]$, every $a \in A$, and every $y \in \{0,1\}$, the number $n(S,j,a,y)$ of examples in $S$ with label $y$ whose $j$-th feature has value $a$
    \item for every $j \in [d]$, every $i \in [n]$, a bidirectional pointer to the next example $\bz_l$ in the sequence such that $\bz_{lj}=\bx_{ij}$.     
\end{itemize}

Note that the above data structure and counters can be computed altogether in time $O(nd)$ by traversing $S$. The bidirectional pointers allow us to enumerate all examples with value $a$ in features $j$ in time $O(|\bar{S}|)$, where $\bar{S}$ denotes such a set of examples. Observe that, for every $j \in [d]$, in time $O(|A|)=O(1)$ we can compute $G(S,j,t)$ from $n(S,j,\cdot,\cdot)$. Hence, in time $O(d)$ we can find  $(j^*,t^*)=\arg\min\{G(S,j,t):j \in [d], t \in A\}$.

When making a split in the tree for a given node $v$, we first determine the feature $j$ with maximum Gini gain in $S(v)$. Then, we compute $a = \arg \min_{a \in A} \left( n(S(v),j,a,0) + n(S(v),j,a,1) \right) $. After that, we create a new sequence $\bar{S}(v)$ containing all examples with value $a$ in feature $j$, which are removed from $S(v)$, i.e. $S(v) \gets S(v) \setminus \bar{S}(v)$. We update all counters and pointers for the $S(v)$ and $\bar{S}(v)$, which can be done in time $O(d |\bar{S}(v)| )$. From the fact, that for every node $v$, $ |\bar{S}(v)| \leq \frac12 \cdot |S(v)|$, it follows that the total running time of \AlgoBuild\ when all features are categorical is $O(n d \log n)$.

\subsection{Supplementary material for Section~\ref{sec:lb}}
\subsubsection{Proof of Theorem~\ref{thm:time_lb}.}
We prove a more general claim, that is, that for every $\delta > 0$ any $(\beps,\nicefrac{1}{2}+\delta)$-feasible fully dynamic algorithm has expected running time $\Omega(\delta nd)$.

Let $d \ge 2$. For $j \in [d]$ let $M^{(j)}$ be the ${2k \times (d+1)}$ boolean matrix defined as follows. The first $k$ rows are set to zero. The last $k$ rows have the $j$-th entry and the $(d+1)$-th entry set to one. Let $S^{(j)}$ be the sequence having the rows of $M^{(j)}$ as elements, consider any decision tree $(\beps,0)$-feasible w.r.t.\ $S^{(j)}$, and let $u$ be its root. Since $|S^{(j)}|>k$ and $g(S)=\frac{1}{2}>\epsThr$ and $u$ has depth $0<h$, then $u$ is not a leaf. Moreover, since $g(S^{(j)},j)=\frac{1}{2}$ while $g(S^{(j)},j')=0 \; \forall j' \in [d] \setminus \{j\}$, $u$ splits on $j$. It is immediate to see that the children of $u$ are leaves with labelling rule $\Ind_{\{x_j=1\}}$, so this is the only feasible class for any $x$.

Now let $\scA$ be a fully dynamic algorithm and let $j \in [d]$; note that $j$ is unknown to $\scA$. We feed $\scA$ with $\ins(x)$ for every row $x$ of $M^{(j)}$. Then, we ask $\scA$ to label the random vector $x \in \{0,1\}^d$ chosen uniformly at random among all vectors having exactly $\lceil \nicefrac{d}{2} \rceil$ features set to $1$. As shown above, $\scA(x)=\Ind_{\{x_j=1\}}$. Since $j$ is unknown to $\scA$, deciding if $x_j=1$ with probability $\nicefrac{1}{2}+\delta$ requires looking at $\Omega(\delta d)$ bits of the active sequence in expectation. Hence, $\scA$ has expected running time $\Omega(\delta d)$.

Iterating the construction yields an asymptotic time bound. To this end simply ask $\scA$ to delete the $2k$ examples inserted, and repeat. This yields an expected running time of $\Omega(nd)$ where $n$ is the total number of insertions and deletions. 

\subsection{Additional Experiments}\label{sec:addexp}

In this section, we report additional experiments, in particular for those datasets that have not been considered in the main part of our paper, while we also include a comparison with HAT. Figures~\ref{fig:sw_add}, \ref{fig:RU_f1}, and~\ref{fig:ru_add} confirm the trends observed in the main part of our paper. In particular, the running time decreases by several orders of magnitude as $\epsilon$ increases, while the F1-score is relatively less affected. Figure~\ref{fig:sw_add} reports the F1-score and average running time in the SW model, for the remaining three datasets. Figure~\ref{fig:RU_f1} shows the F1-score of \algo\ in the RU model for the three datasets considered in the main part of the paper, while Figure~\ref{fig:ru_add} shows F1-score and average running time in the RU model, for the remaining three datasets.
\begin{figure*}
    \centering
    \begin{subfigure}[b]{0.3\textwidth}
         \centering
         \includegraphics[width=\textwidth]{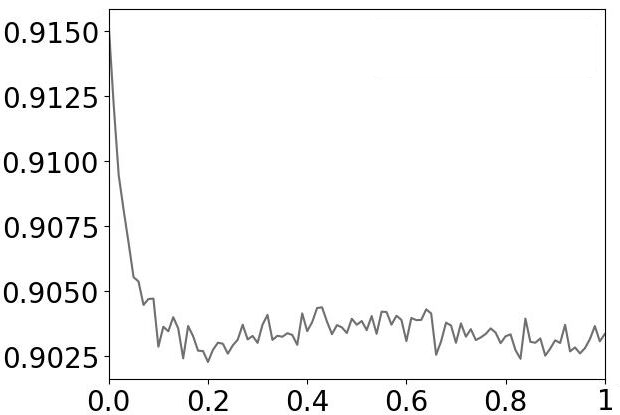}
         \label{fig:f1_cov}    
    \end{subfigure}
    \hfill
    \begin{subfigure}[b]{0.3\textwidth}
         \centering
         \includegraphics[width=\textwidth]{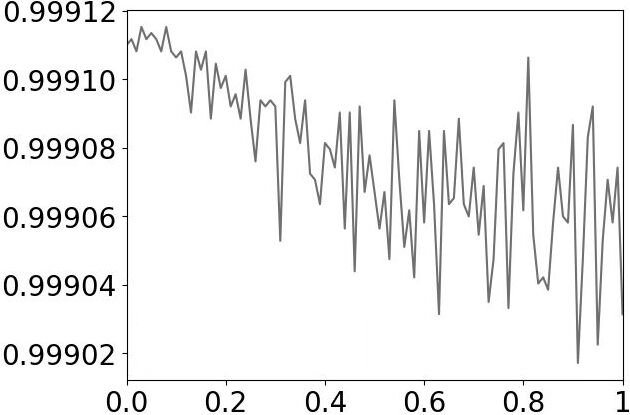}
         \label{fig:f1_kdd}    
    \end{subfigure}
    \hfill
    \begin{subfigure}[b]{0.3\textwidth}
         \centering
         \includegraphics[width=\textwidth]{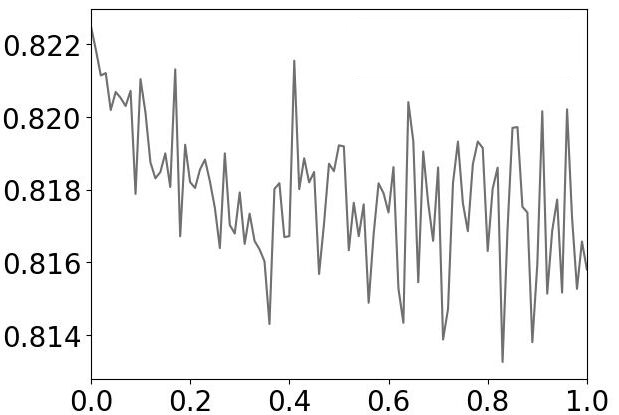}
         \label{fig:f1_noaa}    
    \end{subfigure}
    \\
    \label{fig:f1_scores_add}
    \begin{subfigure}[b]{0.3\textwidth}
         \centering
         \includegraphics[width=\textwidth]{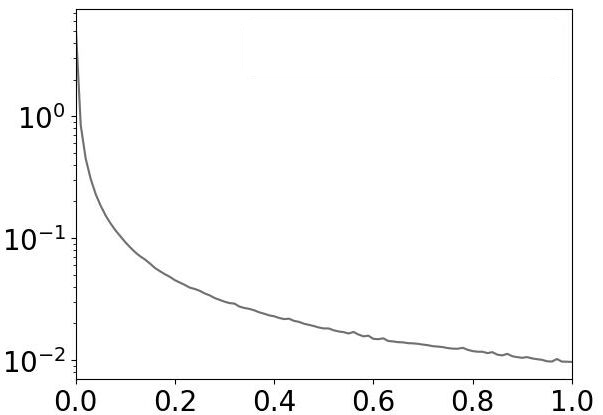}
         \label{fig:time_cov}    
    \end{subfigure}
    \hfill
    \begin{subfigure}[b]{0.3\textwidth}
         \centering
         \includegraphics[width=\textwidth]{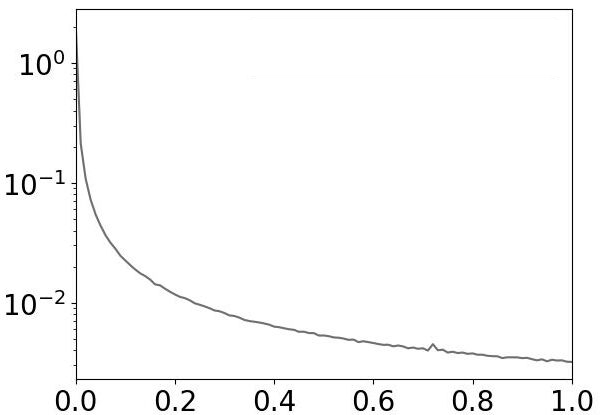}
         \label{fig:time_kdd}    
    \end{subfigure}
    \hfill
    \begin{subfigure}[b]{0.3\textwidth}
         \centering
         \includegraphics[width=\textwidth]{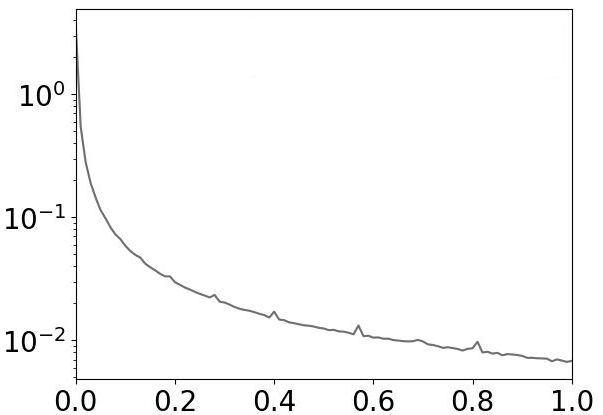}
         \label{fig:time_noaa}    
    \end{subfigure}
\caption{F1-score (top) and amortized running time (bottom) in the SW model on the Forest Covertype, KDDCUP99 and NOAA Weather datasets (left to right).}\label{fig:sw_add}
\end{figure*}
\begin{figure*}[h]
    \centering
    \begin{subfigure}[b]{0.3\textwidth}
         \centering
         \includegraphics[width=\textwidth]{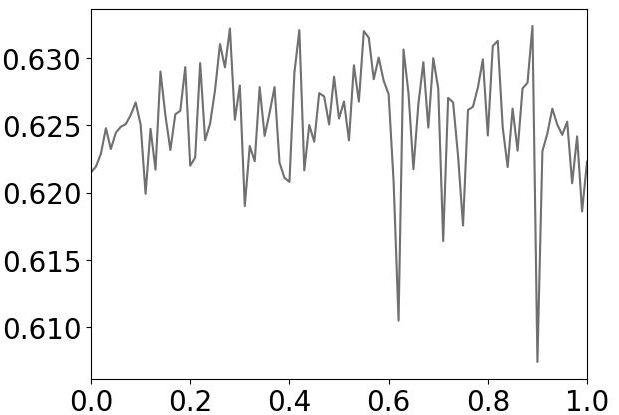}
         \label{fig:ru_elec_f1}    
    \end{subfigure}
    \hfill
    \begin{subfigure}[b]{0.28\textwidth}
         \centering
         \includegraphics[width=\textwidth]{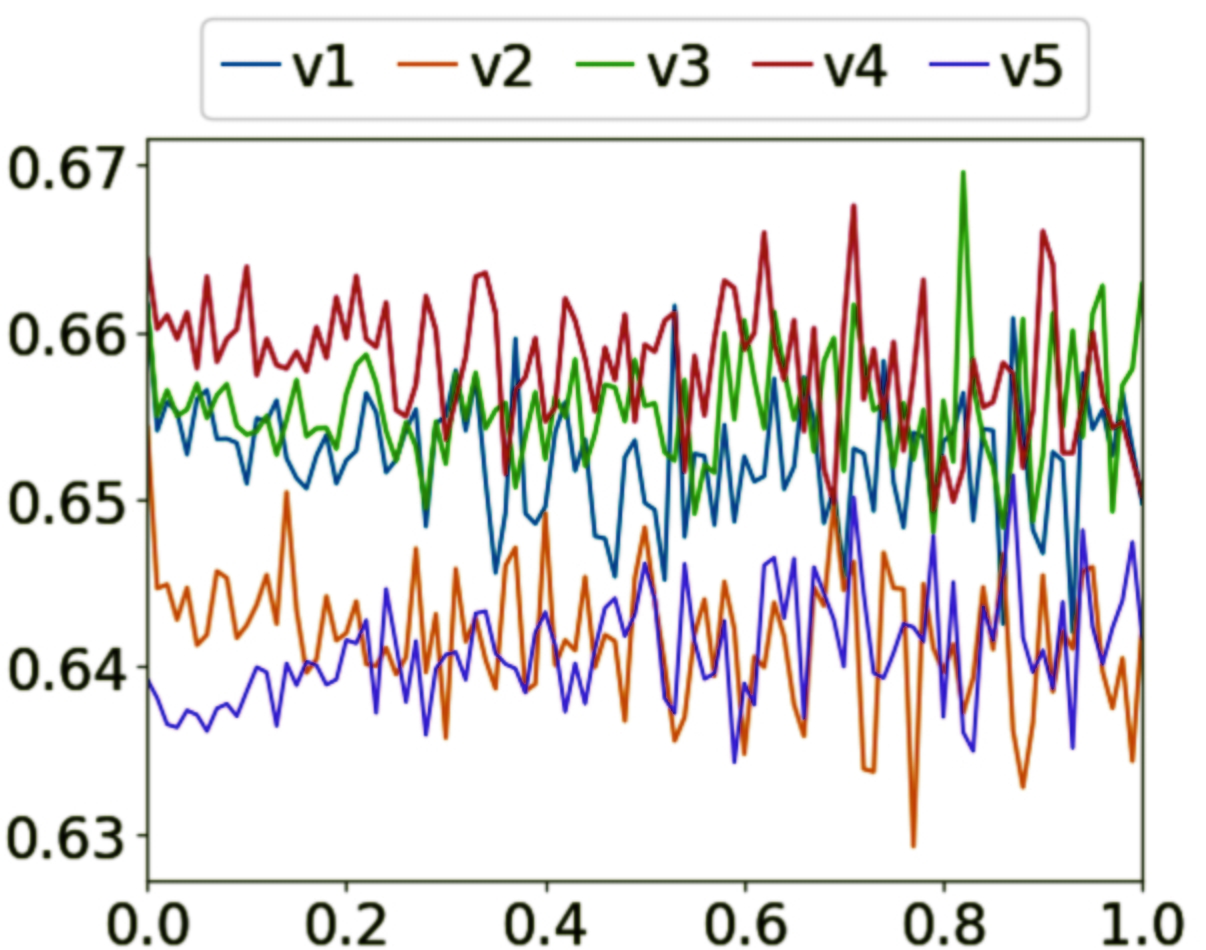}
         \label{fig:ru_insects_f1}    
    \end{subfigure}
    \hfill
    \begin{subfigure}[b]{0.3\textwidth}
         \centering
         \includegraphics[width=\textwidth]{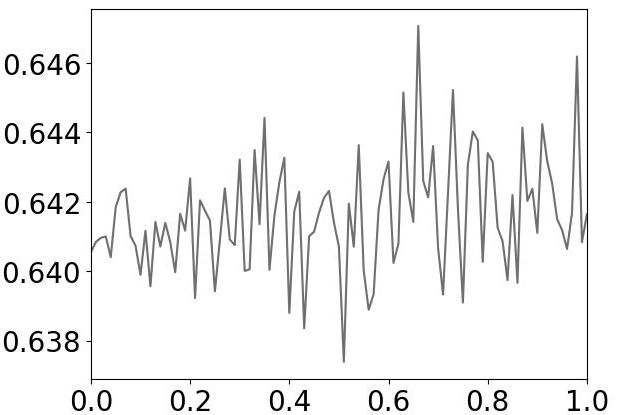}
         \label{fig:ru_poker_f1}    
    \end{subfigure}
    \caption{F1-scores in the RU model on the Electricity, INSECTS and Poker datasets.}\label{fig:RU_f1}
\end{figure*}
\begin{figure*}
    \centering
    \begin{subfigure}[b]{0.3\textwidth}
         \centering
         \includegraphics[width=\textwidth]{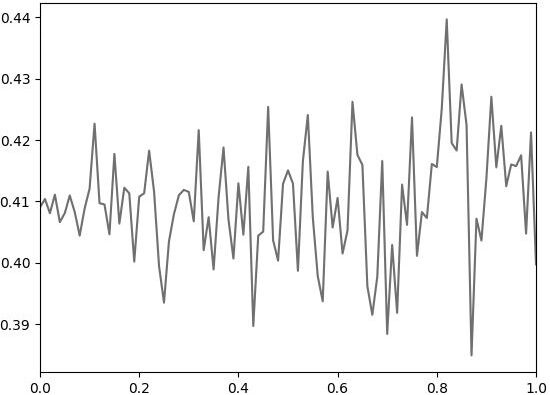}
         \label{fig:f1_cov_ru}    
    \end{subfigure}
    \hfill
    \begin{subfigure}[b]{0.3\textwidth}
         \centering
         \includegraphics[width=\textwidth]{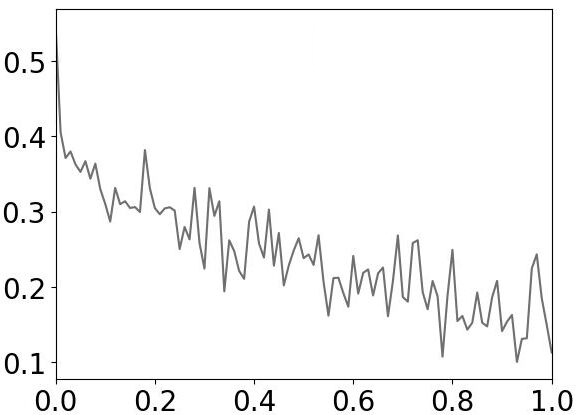}
         \label{fig:f1_kdd_ru}    
    \end{subfigure}
    \hfill
    \begin{subfigure}[b]{0.3\textwidth}
         \centering
         \includegraphics[width=\textwidth]{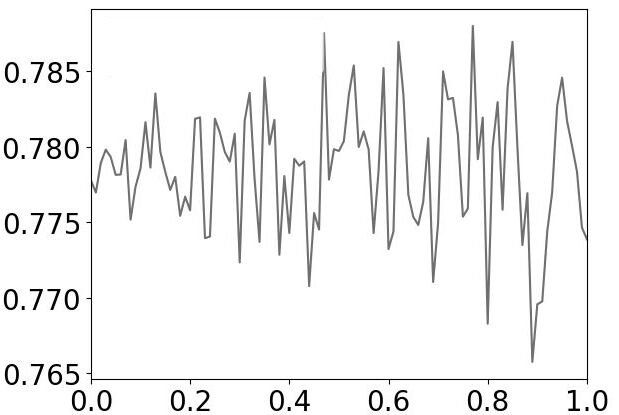}
         \label{fig:f1_noaa_ru}    
    \end{subfigure}
    \label{fig:ru_f1_scores_add}
    \begin{subfigure}[b]{0.3\textwidth}
         \centering
         \includegraphics[width=\textwidth]{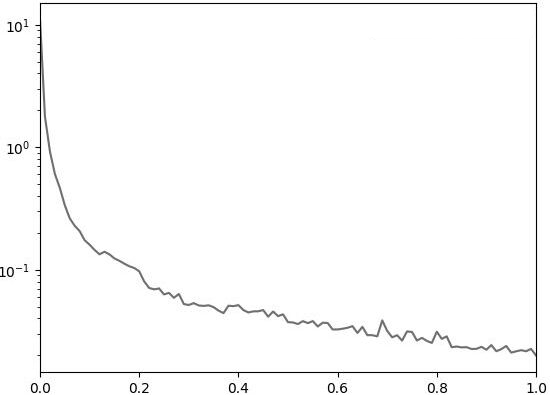}
         \label{fig:time_cov_ru}    
    \end{subfigure}
    \hfill
    \begin{subfigure}[b]{0.3\textwidth}
         \centering
         \includegraphics[width=\textwidth]{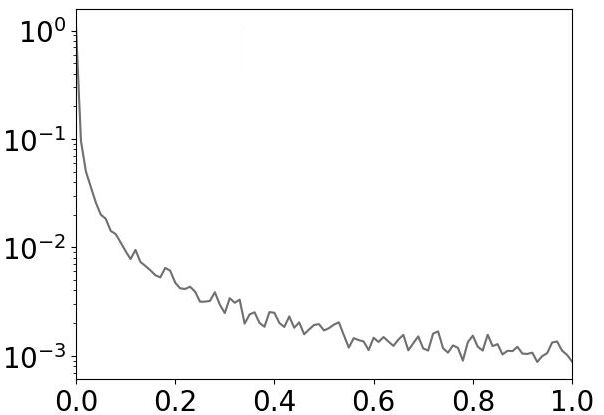}
         \label{fig:time_kdd_ru}    
    \end{subfigure}
    \hfill
    \begin{subfigure}[b]{0.3\textwidth}
         \centering
         \includegraphics[width=\textwidth]{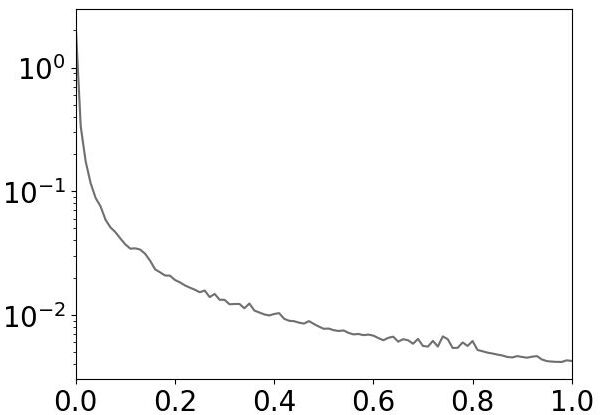}
         \label{fig:time_noaa_ru}    
    \end{subfigure}
\caption{F1-score (top) and amortized running time (bottom) in the RU model on the Forest Covertype, KDDCUPP99 and NOAA Weather datasets (left to right).}\label{fig:ru_add}
\end{figure*}
Table~\ref{tab:grace_periods} shows the optimal grace periods for HAT and EFDT. Table~\ref{tab:exec_time_compare}
reports the total running times of the three different approaches, while Table~\ref{tab:f1_compare} reports the corresponding F1-scores. In all those experiments, we let $\epsilon=0.5$ for our algorithm \algo. Those results confirm our findings, namely, that our algorithm outperforms the other approaches in terms of F1-score, even in the incremental setting. 

\begin{table*}[h]
    \centering
    \begin{tabular}{lll}
        \toprule
        Dataset & Grace Period (HAT) & Grace Period (EFDT) \\
        \midrule
        Electricity & 1000 & 500 \\
        Forest Covertype & 1000 & 100 \\
        INSECTS v1 & 1000 & 500 \\
        INSECTS v2 & 500 & 100 \\
        INSECTS v3 & 500 & 100 \\
        INSECTS v4 & 1000 & 100 \\
        INSECTS v5 & 500 & 500 \\
        KDDCUP99 & 1000 & 500 \\
        NOAA Weather & 1000 & 500 \\
        Poker & 500 & 100 \\
        \bottomrule
    \end{tabular}
    \caption{Grace Periods of best F1-scores (incremental setting). }
    \label{tab:grace_periods}
\end{table*}

\begin{table*}[h]
    \centering
    \begin{tabular}{llll}
        \toprule
        Dataset & HAT & EFDT & \algo\\
        \midrule
        Electricity & 2.06 & 1.65 & \textbf{0.76} \\
        Forest Covertype & 39.95 & 42.47 & \textbf{30.78}\\
        INSECTS v1 & \textbf{4.77} & 4.85 & 11.28 \\
        INSECTS v2 & \textbf{2.53} & 3.13 & 8.34\\
        INSECTS v3 & \textbf{6.79} & 7.54 & 15.86 \\
        INSECTS v4 & \textbf{5.73} & 6.30 & 12.74 \\
        INSECTS v5 & 7.31 & \textbf{6.84} & 16.85 \\
        KDDCUP99 & 18.28 & 17.72 & \textbf{10.49} \\
        NOAA Weather & 0.82 & 0.73 & \textbf{0.66} \\
        Poker & \textbf{16.78} & \textbf{16.26} & 36.41 \\
        \bottomrule
    \end{tabular}
    \caption{Total execution times in seconds (incremental setting).}
    \label{tab:exec_time_compare}
\end{table*}

\begin{table*}[h]
    \centering
    \begin{tabular}{llll}
        \toprule
        Dataset & F1-score (HAT) & F1-score (EFDT) & F1-score (\algo)\\
        \midrule
        Electricity & 70.81 & 72.05 & \textbf{82.12} \\
        Forest Covertype & 84.30 & 83.64 & \textbf{90.37} \\
        INSECTS v1 & 87.35 & 88.96 & \textbf{92.90} \\
        INSECTS v2 & 87.61 & 87.40 & \textbf{92.78} \\
        INSECTS v3 & 92.32 & 92.51 & \textbf{94.84} \\
        INSECTS v4 & 91.32 & 91.15 & \textbf{91.85} \\
        INSECTS v5 & 89.95 & 89.85 & \textbf{93.48} \\
        KDDCUP99 & \textbf{99.97} & 97.98 & 99.91 \\
        NOAA Weather & 80.58 & 80.78 & \textbf{81.92} \\
        Poker & 67.93 & 79.69 & \textbf{88.34} \\
        \bottomrule
    \end{tabular}
    \caption{F1-scores (incremental setting).}
    \label{tab:f1_compare}
\end{table*}

\end{document}